\renewcommand{\emph}[1]{\textit{#1}}
\definecolor{darkblue}{rgb}{0, 0, 0.5}
\newtheorem{theorem}{Theorem}
\newtheorem{assumption}{Assumption}
\newtheorem{lemma}[theorem]{Lemma} 
\newtheorem{proposition}[theorem]{Proposition} 
\newtheorem{corollary}[theorem]{Corollary}
\newcommand{\coloneqq}{:=}
\global\long\def\th{\textbf{\ensuremath{\boldsymbol{\theta}}}}%
\global\long\def\x{\boldsymbol{x}}
\global\long\def\z{\boldsymbol{z}}
\global\long\def\E{\mathbb{E}}%
\global\long\def\R{\mathbb{R}}%
\global\long\def\lip{\text{Lip}}%
\global\long\def\supp{\text{supp}}%
\global\long\def\bp{\boldsymbol{\phi}}%
\global\long\def\M{\mathcal{M}}%
\global\long\def\w{\boldsymbol{w}}%
\global\long\def\y{\textbf{y}}%
\global\long\def\D{\mathcal{D}}%
\global\long\def\grad{\boldsymbol{g}}%
\global\long\def\DD{\mathcal{L}}%
\global\long\def\u{\boldsymbol{u}}%
\icmltitlerunning{Greedy Subnetwork Selection}
\begin{document}

\twocolumn[

\icmltitle{
%Winning
Good Subnetworks Provably Exist: Pruning via Greedy Forward Selection
}
%Network Pruning by Greedy Subnetwork Selection} 
\icmlsetsymbol{equal}{*}
\begin{icmlauthorlist}
\icmlauthor{Mao Ye}{ut}
\icmlauthor{Chengyue Gong}{equal,ut}
\icmlauthor{Lizhen Nie}{equal,uchicago}
\icmlauthor{Denny Zhou}{go}
\icmlauthor{Adam Klivans}{ut}
\icmlauthor{Qiang Liu}{ut}
\end{icmlauthorlist}
\icmlcorrespondingauthor{Mao Ye}{my21@cs.utexas.edu}
\vskip 0.3in

\icmlaffiliation{ut}{Department of Computer Science, the University of Texas, Austin}
\icmlaffiliation{uchicago}{Department of Statistics, the University of Chicago}
\icmlaffiliation{go}{Google Research}
]
\printAffiliationsAndNotice{\icmlEqualContribution}
\date{\today}

\begin{abstract}
Recent empirical works show that large deep neural networks are often highly redundant and one can find much smaller subnetworks 
without a significant drop of accuracy. 
However, most existing 
methods of network pruning are empirical and heuristic, 
leaving it open whether good subnetworks provably exist, how to find them efficiently, and if network pruning can be provably better than direct training using gradient descent. 
We answer these problems positively by proposing a simple greedy  selection approach for finding 
good subnetworks, which starts from an empty network and greedily adds important neurons from the large network. 
This differs from
the existing 
methods based on 
backward elimination, which remove redundant neurons from the large network. 
Theoretically, 
applying the greedy selection strategy  
on sufficiently large {pre-trained} networks
guarantees to find small subnetworks with lower loss than networks 
directly trained with gradient descent. 
Our results also apply to pruning randomly weighted networks.
Practically, we improve prior arts of network pruning on learning compact neural architectures on ImageNet, including ResNet, MobilenetV2/V3, and ProxylessNet. 
Our theory and empirical results on MobileNet suggest  that we should fine-tune the pruned subnetworks to leverage the information from the large model, instead of re-training from new random initialization as suggested in \citet{liu2018rethinking}. 
\end{abstract}

\section{Introduction}
The last few years have witnessed 
the remarkable success of 
large-scale deep neural networks (DNNs) 
in achieving human-level accuracy on complex cognitive tasks, including image classification~\citep[e.g.,][]{he2016resnet}, speech recognition~\citep[e.g.,][]{amodei2016speech} and machine translation~\citep[e.g.,][]{wu2016google}. 
However,  modern large-scale DNNs tend to 
suffer from slow inference speed 
and high energy cost, which form critical bottlenecks on edge devices such as mobile phones and Internet of Things (IoT)~\citep{cai2018proxylessnas}.
It is of increasing importance to obtain 
DNNs with small sizes and low energy costs. 
%T%and  energy-efficient structures. 

%^One widely used approach for energy efficient 
Network pruning has been shown to be a successful approach for learning small and energy-efficient  neural networks~\citep[e.g.,][]{han2015compress}. 
%The popular framework of pruning is to 
These methods start with a pre-trained large neural network  
and  remove the redundant units (neurons or filters/channels) to obtain a much smaller subnetwork without significant drop of accuracy. 
%The units can be the neurons for fully connected networks and the filters/channels for convolutional neural networks (CNNs). 
See e.g., \citet[][]{zhuang2018dpl, luo2017thinet, liu2017slim, liu2018rethinking, he2019filter, he2018amc} for examples of recent works. 
%more details. 
%Related, there 

However, despite the recent empirical successes,
%rigorous
thorough 
theoretical understandings on why and how  network pruning works are still largely missing. 
Our work is motivated by the following basic questions:  %on network pruning: 

\noindent\textbf{The Subnetwork Problems:}
\emph{
Given a pre-trained large (over-parameterized) neural network, 
%could we find a
does there exist a small subnetwork inside the large network that performs almost as well as the large network? 
How to find such a good subnetwork computationally efficiently?
Does the small network pruned from the large network provably outperform the networks of same size but directly trained with gradient descent starting from scratch?
}

%\iffalse 
\begin{figure*} \label{fig:vs}
    \centering
    \begin{tabular}{cc}
        \includegraphics[width=1.6\columnwidth]{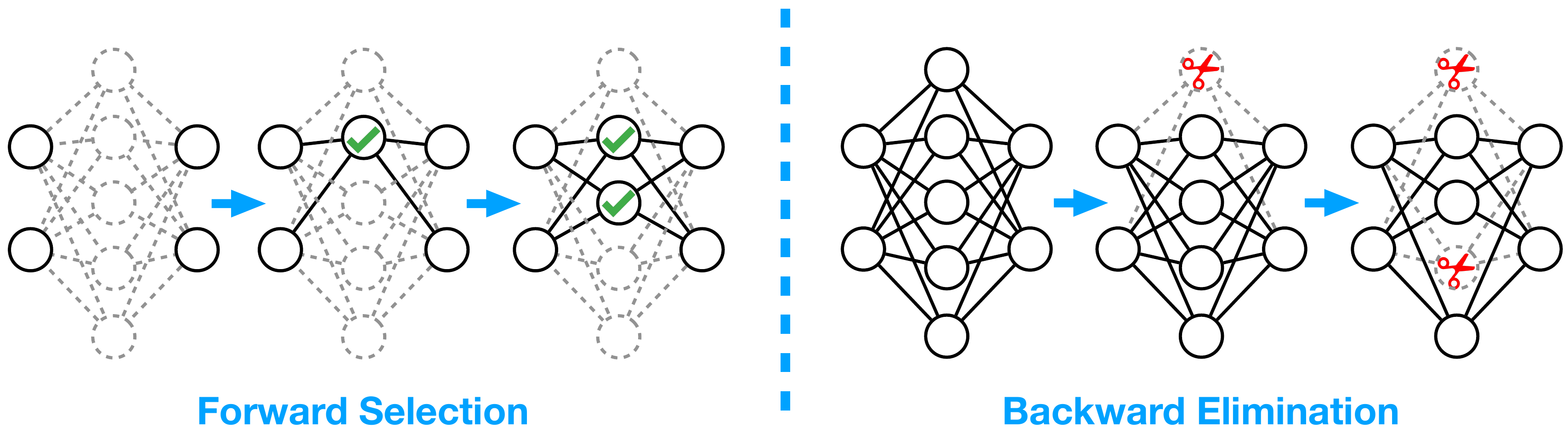} \\
        \end{tabular}
    \caption{Left: Our method constructs good  subnetworks by greedily adding the best neurons starting from an empty network. 
    Right: Many existing methods of network pruning works by gradually removing the redundant neurons starting from the original large network.  
    }
    \label{fig:my_label}
\vspace{-15pt}
\end{figure*}
%\fi 

We %draw an initial study on these problems, 
approach this problem by considering a simple greedy selection strategy, 
which starts from an empty network and 
constructs a good subnetwork by sequentially adding neurons from the pre-trained large network to yield the largest immediate decrease of the loss (see %\red{`Forward Selection' in }
Figure~\ref{fig:vs}(left)). 
This simple algorithm provides both strong  theoretical guarantees and state-of-the-art empirical results, as summarized below.  

\paragraph{Greedy Pruning Learns Good Subnetworks}  
For two-layer neural networks, 
our analysis shows that our method  
yields a network of size  $n$
with a loss of $\mathcal O(1/n) + \mathcal L_N^*$, 
where $\mathcal L^*_N$ is the optimal loss 
we can achieve with all the neurons 
in the pre-trained large network of size $N$.    %(depending on the original network). 
%In addition \red{change to Further?}, 
Further, 
if the pre-trained large network is sufficiently over-parametrized, 
%with mild conditions, 
we achieve a much smaller  loss of  $\mathcal O(1/n^2)$. 
%\qq{The }
Additionally, the $\mathcal O(1/n^2)$  rate holds even when the weights of the large network are drawn i.i.d. from a proper distribution.

In comparison, standard training of networks of size $n$ by gradient descent yields %at least 
%\red{
a loss of $\mathcal O(1/n + \varepsilon)$ 
following the mean field analysis of \citet{song2018mean, mei2019meandimfree}, 
where $\varepsilon$ is usually a small term 
%is a small term 
involving the loss of training infinitely wide networks; % for a finite time %{\color{blue} use same time instead of finite time
 %which does not have a simple bound using existing techniques.
see Section~\ref{subsec:faster_mf} for more details. 
%is a small term involving the loss of training infinitely wide networks for a finite time.  
%which is 

Therefore, our fast $\mathcal O(1/n^2)$ rate suggests that pruning from over-parameterized models %provably %\red{helps learn} 
%learns f
guarantees to find  
more accurate small networks than direct training %from scratch  \red{remove 'from scratch?'} 
using gradient descent, 
%from scratch,
providing a theoretical justification of the widely used network pruning paradigm.   

\paragraph{Selection vs. Elimination} 
%Algorithmically,
%Different from 
%Most existing 
Many of the existing  %\red{conventional?}
methods of network pruning  
are based on \emph{backward elimination} of the redundant neurons starting from the full large network following certain criterion  \citep[e.g.,][]{luo2017thinet,liu2017slim}.
%that gradually remove redundant neurons from the original large network,
In contrast, our method is based on \emph{forward selection}, 
progressively growing the small network by adding the neurons; see Figure \ref{fig:vs} for an illustration. 
Our empirical results show that, 
our forward selection achieves better accuracy on pruning DNNs
under fixed FLOPs constraints,   %\red{predefined FLOPs constraints}, 
 e.g., ResNet \citep{he2016resnet}, MobileNetV2 \citep{sandler2018mobilenetv2}, 
ProxylessNet \citep{cai2018proxylessnas} and MobileNetV3 \citep{howard2019searching} on ImageNet.
In particular, our method outperforms all prior arts on pruning MobileNetV2 on ImageNet, achieving the best top1 accuracy under any FLOPs constraint.

Additionally, we draw thorough comparison between the forward selection strategy with the backward elimination in Appendix~\ref{apx: gbe}, and demonstrate the advantages of forward selection from both theoretical and empirical perspectives.

\paragraph{Rethinking the Value of Network Pruning} 
Both our theoretical and empirical discoveries 
highlight the benefits of using  
\emph{large, over-parameterized} 
models to learn small models that \emph{inherit the weights} of the large network. This implies that in practice, we should \emph{finetune} the pruned network to leverage the valuable information of both the structures and parameters in the large pre-trained model.

However, 
these observations are different from 
the recent findings of \citet{liu2018rethinking},
whose empirical results suggest that training a large, over-parameterized network is often not necessary 
for obtaining an efficient small network   
and finetuning the pruned subnetwork is no   better than retraining it starting from a new random initialization.
%training them from scratch.

{\color{black}We think the apparent inconsistency happens because, different from our method, the pruning algorithms tested in \citet{liu2018rethinking} are not able to make the pruned network efficiently use the information in the weight of the original network. 
 To confirm our findings, 
we perform tests on compact networks on mobile settings such as 
MobileNetV2 \citep{sandler2018mobilenetv2} and MobileNetV3 \citep{howard2019searching}, %\citep{sandler2018mobilenetv2}, 
and find that 
finetuning a pruned MobileNetV2/MobileNetV3 gives much better performance than 
re-training it from a new random initialization, 
which violates  
the conclusion of  \citet{liu2018rethinking}.    
Besides, 
we observe that  
increasing the size of  
%original model 
pre-trained large models yields better  pruned subnetwork as predicted by our theory.   
See Section~\ref{sec:rethinking} and \ref{sec:large} for a thorough discussion. }

\paragraph{Notation}
We use notation $[N]:=\{1,\ldots, N\}$ for the set of the first $N$ positive integers. 
All the vector norms $\norm{\cdot}$ are assumed to be $\ell_2$ norm. 
$\left\Vert \cdot \right\Vert _{\lip}$ and $\left\Vert \cdot \right\Vert _{\infty}$ denote Lipschitz and $\ell_\infty$ norm for functions. 
%$\N$ is denoted as the set of natural numbers. 
We denote $\supp(\rho)$ as the support of distribution $\rho$.

\section{Problem and Method} \label{sec:problemset}

%\subsection{}
%For simplicity, we start our analysis from a
We focus on 
two-layer networks for analysis. 
Assume we are 
given a pre-trained large neural network consisting of $N$ neurons,
\begin{align*} 
f_{[N]}(\x) = \sum_{i=1}^N \sigma(\x; \th_i)/N,  
%&~~~~~~&\sigma(\x; \th_i)=b_i\sigma_{+}(\boldsymbol{a}_i^\top\x),
\end{align*}
 where 
 $\sigma(\x; \th_i)$ denotes the $i$-th neuron with parameter $\th_i \in\RR^d$ and input $\x$. 
 %It is typical to have 
 In this work, we consider 
 $$\sigma(\x; \th_i)=b_i\sigma_{+}(\boldsymbol{a}_i^\top\x),$$ 
 where $\th_i=[\boldsymbol{a}_i, b_i]$
 and  $\sigma_{+}(\cdot)$ is an activation function such as Tanh and ReLU. 
 But our algorithm works for general forms of  $\sigma(\x; \th_i)$. 
 %$\theta_i$ denotes the parameter of the $i$-th neuron and $\sigma(\cdot,\cdot)$ a nonlinear function. 
 %For the example, the ReLU network 
 %is $\sigma(\theta, x)=\theta_{1} \max(0, \theta_{2:}^\top x)$. 
 Given an observed dataset $\dataset := (\x^{(i)}, y^{(i)})_{i=1}^m$ with $m$ data points, %we consider 
we consider the following regression loss of network $f$: % function, % network $f$:
 \begin{eqnarray*} \label{equ:mse}
 \DD[f] &=& \E_{(\x,y)\sim \dataset}[ \left( 
 f(\x) - y\right)^2]/2.
 \end{eqnarray*}
 
 We are interested in finding a subset $S$ of $n$ neurons ($n<N$) from the large network, which minimizes the loss of the subnetwork $f_S(\x) = \sum_{i\in S} \sigma(\x; \th_i)/|S|$, i.e., 
 \begin{align}\label{equ:minF}
 \min_{S \subseteq [N]}\DD[f_S]~~~~s.t.~~~~|S|\leq n.
 \end{align}
 {\color{black}Here we allow the set $S$ to  contain repeated elements.} This is a challenging combinatorial optimization problem. We propose a greedy forward selection strategy, which starts from an empty network %\red{change set to network?} 
 and 
 gradually adds the neuron that yields the best immediate decrease on loss. % starting 
 Specifically, starting from $S_0 = \emptyset$, we sequentially add neurons via 
  %that is, 
 \begin{equation} \label{equ:step}
 S_{n+1} \gets S_n \cup i_n^*
 ~~~~\text{where}~~~
 i_n^* = \argmin_{i\in [N]} \DD[f_{S_n \cup i}]. 
 \end{equation}
Notice that the constructed subnetwork inherits the weights of the large network and in practice we may further finetune the subnetwork with training data. More details of the practical algorithm and its extension to deep neural networks are in Section \ref{sec:algo}. 

\section{Theoretical Analysis} \label{sec:theory}
%In this section, 
The simple greedy procedure %in \eqref{equ:step}
yields strong theoretical guarantees, 
which, as a byproduct, 
also implies the existence of small and accurate subnetworks. 
%Our observations are summarized as follows:
Our results are two fold: 

\emph{i) Under mild conditions, 
the selected subnetwork of size $n$  achieves $\DD[f_{S_n}] =  \mathcal O(1/n) + \mathcal L_N^*$, 
%where $\mathcal L_N^*\ge0$ depending on $f_{[N]}$ is the optimal loss we can achieve (see the definition in Equ  (\ref{equ:LN*})).
% (when the loss ). 
%is the optimal value (depending on the original network) we can achieve.}
%, 
where $\mathcal L_N^*$  is the best possible loss achievable by convex combinations of \emph{all} the $N$ neurons in $f_{[N]}$.
}

\emph{ii) We achieve a faster rate of $\mathcal L[f_{S_n}] = \mathcal O(1/n^2)$ if the large network $f_{[N]}$ is sufficiently over-parameterized and can overfit the training data subject to small perturbation 
%in a proper sense 
(see Assumption~\ref{asm:inter_N}). 
}

In comparison, the mean field analysis of \citet{song2018mean, mei2019meandimfree} shows that:

\emph{iii) Training a network of size $n$ using (continuous time) gradient descent starting from random initialization gives an $\mathcal O(1/n+\varepsilon)$ loss, where $\varepsilon$ 
is a (typically small) term involving the loss of  infinitely wide networks trained with gradient dynamics.
%for a finite time \red{the finite time is not precise. The trainning time should be the same and finite for n-network and infty-network}}.
See \citet{song2018mean, mei2019meandimfree} for details.  
}

%This shows that subnetwork selection from\emph{large,over-parameterized} models learns 

Our fast $\mathcal O(1/n^2)$ rate 
shows that subnetwork selection from \emph{large, over-parameterized} models %provides a provably better way for 
yields provably better results than 
training 
{small networks} of the same size 
 starting from scratch using gradient descent. 
This provides the first theoretical justification of the empirical successes of the popular network pruning paradigm. 
%It highlights the importance of 
%It suggests that the weights learned by the large network is important to the small network. Without the learned weights, training from scratch gives worse performance than finetuning.

We now introduce the theory in depth. 
We start with the general  $\mathcal O(1/n)$ rate 
in Section \ref{subsec:theory_insuff}, 
and then establish and discuss the faster $\mathcal O(1/n^2)$ rate in Section \ref{subsec:faster_N} and \ref{subsec:faster_mf}.

\subsection{General Convergence Rate} \label{subsec:theory_insuff}
%We start with the following regularity conditions. 
%Define
Let $\DD_N^*$ be the minimal loss achieved by the best convex combination of all the $N$ neurons in $f_{[N]}$, that is, 
\begin{eqnarray} \label{equ:LN*}
\DD_{N}^{*}=\underset{\boldsymbol{\alpha}=[\alpha_{1},...,\alpha_{N}]}{\min}\left\{ \DD[f_{\boldsymbol{\alpha}}]:\alpha_{i}\ge0,\sum_{i=1}^{N}\alpha_{i}=1\right\},
\end{eqnarray}
where $f_{\boldsymbol{\alpha}}=\sum_{i=1}^{N}\alpha_{i}\sigma(\th_{i},\x)$. It is obvious that $\DD_N^*\le\DD[f_{[N]}]$.
%The following result establishes the 
We can establish 
the general $\mathcal O(1/n)$ rate with the following mild regularity conditions. 

\begin{assumption}[\textbf{Boundedness and Smoothness}] \label{asm:bound_smooth}
Suppose that  $||\x^{(i)}||\le c_1$, $\left|y^{(i)}\right|\le c_1$ 
for every $i\in[m]$, 
and $\left\Vert \sigma_{+}\right\Vert _{\lip}\le c_1$, $\left\Vert \sigma_{+}\right\Vert _{\infty}\le c_1$ for some $c_1<\infty$.
%\red{Assume that $\forall i \in [N]$, $||\th_i||\le c_2$ for some $c_2<\infty$.  ----- I still do not get it, is it equivalent to say that $\th_i \in \RR^d$ and $N<\infty$? It is too simple but presented in a too sophistic way} 
\end{assumption}

\begin{proposition} \label{thm:convex}
Under Assumption \ref{asm:bound_smooth}, if $S_{n}$ is constructed by (\ref{equ:step}), we have $\DD[f_{S_{n}}]=\mathcal{O}(1/n)+\DD^{*}_N$, for  $\forall n \in[N]$.
\end{proposition}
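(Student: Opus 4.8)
The plan is to recognize the greedy forward selection~(\ref{equ:step}) as a Frank--Wolfe / greedy iteration for the convex program $\min_{f\in\mathcal P}\DD[f]$, where $\mathcal P=\mathrm{conv}\{\sigma(\cdot;\th_1),\dots,\sigma(\cdot;\th_N)\}$ is the polytope of convex combinations of the $N$ neurons, viewed as elements of $L^2$ of the empirical measure on $\dataset$, and then run the standard convergence analysis. The structural fact that makes this work is that $\DD[f]=\tfrac12\E_{(\x,y)\sim\dataset}[(f(\x)-y)^2]$ is a convex quadratic functional of $f$, so it satisfies the exact, remainder-free identity
\[
\DD[(1-\gamma)f+\gamma h]=\DD[f]+\gamma\big\langle\nabla\DD[f],h-f\big\rangle+\tfrac{\gamma^{2}}{2}\,\E_{(\x,y)\sim\dataset}\!\big[(h(\x)-f(\x))^{2}\big]
\]
for all $f,h$ and $\gamma\in[0,1]$, with $\nabla\DD[f]$ the residual $\x\mapsto f(\x)-y$ and $\langle u,v\rangle:=\E_{(\x,y)\sim\dataset}[u(\x)v(\x)]$; convexity moreover gives $\langle\nabla\DD[f],h-f\rangle\le\DD[h]-\DD[f]$.

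First I would rewrite one greedy step as a convex-combination update: since $|S_n|=n$ and $f_{S_n}$ is a uniform average, $f_{S_{n+1}}=(1-\gamma_n)f_{S_n}+\gamma_n\sigma(\cdot;\th_{i_n^*})$ with $\gamma_n=1/(n+1)$, and by definition $i_n^*$ makes $\DD[f_{S_{n+1}}]$ minimal over all single-neuron additions. Hence, with $v_n$ the Frank--Wolfe vertex $v_n\in\argmin_{v\in\mathcal P}\langle\nabla\DD[f_{S_n}],v\rangle$ (attained at some $\sigma(\cdot;\th_{j_n})$), we have $\DD[f_{S_{n+1}}]\le\DD[(1-\gamma_n)f_{S_n}+\gamma_n v_n]$. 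Let $g=f_{\boldsymbol\alpha^{*}}$ attain the minimum in~(\ref{equ:LN*}), so $\DD[g]=\DD_N^{*}$, and set $D_n:=\DD[f_{S_n}]-\DD_N^{*}\ge0$. Plugging $h=v_n$, $\gamma=\gamma_n$ into the identity and using (i) $\langle\nabla\DD[f_{S_n}],v_n-f_{S_n}\rangle\le\langle\nabla\DD[f_{S_n}],g-f_{S_n}\rangle\le\DD[g]-\DD[f_{S_n}]=-D_n$ (vertex optimality, then convexity) and (ii) the uniform bound $\sup_i\E_{(\x,y)\sim\dataset}[(\sigma(\x;\th_i)-f_{S_n}(\x))^{2}]\le C<\infty$ (each neuron obeys $|\sigma(\x;\th_i)|=|b_i\,\sigma_{+}(\boldsymbol{a}_i^\top\x)|\le|b_i|\,c_1$ under Assumption~\ref{asm:bound_smooth}, so $C=\mathcal O(c_1^{2})$ for bounded output weights), I arrive at the recursion
\[
D_{n+1}\le\Big(1-\tfrac{1}{n+1}\Big)D_n+\tfrac{C}{2(n+1)^{2}}.
\]
Solving this recursion gives $D_n=\mathcal O(1/n)$, i.e.\ $\DD[f_{S_n}]=\mathcal O(1/n)+\DD_N^{*}$, which is Proposition~\ref{thm:convex}.

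The one place that needs care — and the main obstacle — is extracting the advertised $\mathcal O(1/n)$ from that recursion: the uniform-averaging step size $\gamma_n=1/(n+1)$ is precisely the borderline case of the Frank--Wolfe analysis, so a naive telescoping leaves a spurious $\log n$ factor, and one has to argue a bit more sharply, either exploiting the extra slack in (i)--(ii) or comparing against a Maurey-type average of $n$ neurons sampled i.i.d.\ from $\boldsymbol\alpha^{*}$, whose expected excess loss equals $\tfrac{1}{2n}(\E_{i\sim\boldsymbol\alpha^{*}}\E_{\dataset}[\sigma(\x;\th_i)^{2}]-\E_{\dataset}[g(\x)^{2}])=\mathcal O(1/n)$. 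The remaining ingredients are routine: $\DD[f_{S_{n+1}}]\le\DD[f_{S_n\cup\{j_n\}}]=\DD[(1-\gamma_n)f_{S_n}+\gamma_n v_n]$ because $i_n^*$ minimizes over \emph{all} indices, including $j_n$; the exact quadratic expansion is immediate because the empirical measure is finitely supported; and the constant $C$ is finite by the boundedness in Assumption~\ref{asm:bound_smooth} together with a bound on the network weights.
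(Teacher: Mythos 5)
Your setup is exactly the paper's: pass to the marginal polytope $\M_N=\text{conv}\{\bp(\th_1),\dots,\bp(\th_N)\}$ of feature maps, write the greedy update as $\u^k=(1-\xi_k)\u^{k-1}+\xi_k\boldsymbol{q}^k$ with $\xi_k=1/k$ and $\boldsymbol{q}^k$ the loss-minimizing vertex, dominate the greedy choice by the Frank--Wolfe vertex $\boldsymbol{s}^k$, and combine the linear lower bound from convexity with the exact quadratic expansion to get
\[
D_k \;\le\; (1-\xi_k)\,D_{k-1}+C\xi_k^2,\qquad D_k:=\DD[f_{S_k}]-\DD_N^*,
\]
which is precisely inequality \eqref{equ:vffdfd} in Appendix~\ref{apx: pf1}. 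Everything up to that point is correct and identical in substance to the paper (including the remark that the constant $C$ is the squared diameter of $\M_N$ and hence depends on a bound on the weights of the given network, not on Assumption~\ref{asm:bound_smooth} alone).

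The gap is that you stop at the recursion and explicitly leave its resolution open --- and this is indeed the crux. Unrolling $D_k\le(1-1/k)D_{k-1}+C/k^2$ directly gives $kD_k\le(k-1)D_{k-1}+C/k$, hence $kD_k\le C\sum_{j\le k}1/j$ and only $D_k=\mathcal{O}(\log k/k)$; taking equality throughout shows this is tight \emph{for the recursion itself}, so no manipulation of the recursion alone removes the logarithm. Neither of your two escape routes is carried out, and the Maurey one cannot work as stated: averaging $n$ i.i.d.\ draws from $\boldsymbol{\alpha}^*$ shows that \emph{some} size-$n$ subnetwork has excess loss $\mathcal{O}(1/n)$, but greedy forward selection is not guaranteed to find the best size-$n$ subnetwork, so this does not bound $\DD[f_{S_n}]$. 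The paper's own resolution is to rewrite the step as $D_k-C\xi_k\le(1-\xi_k)(D_{k-1}-C\xi_k)$ and telescope using $\prod_{i=2}^{k}(1-1/i)=1/k$; note, however, that the subtracted constant $C\xi_k$ changes with $k$, and chaining the shifted inequalities across iterations generates correction terms of order $C(\xi_{k-1}-\xi_k)=\mathcal{O}(C/k^2)$ after multiplication by $k$ --- exactly the harmonic sum you were worried about --- so you should scrutinize that step rather than take it on faith. If you want a log-free argument in at least the overfitting regime, the potential $\w^k=k(\y-\u^k)$ from the proof of Theorem~\ref{lem:Nnet} gives $\norm{\w^{k+1}}^2\le\norm{\w^k}^2+D_{\M_N}^2$ whenever $\y\in\M_N$ (i.e.\ $\DD_N^*=0$), hence $\DD[f_{S_k}]\le D_{\M_N}^2/k$ cleanly; the general case $\DD_N^*>0$ is where a genuinely sharper argument is still needed.
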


\paragraph{Remark}
Notice that at iteration $n$, the number of neurons in set $S_n$ is no more than n since in each iteration, we at most increase the number of neurons by one. Also, as we allow select one neuron multiple times the number of neurons in $S_n$ can be smaller than $n$.

Note that the condition of Proposition \ref{thm:convex} is very mild. 
It holds for any original network $f_{[N]}$ of any size,
although it is favorable to make $N$ large to obtain a small $\DD^*_N$. In the sequel, we show that  a faster $\mathcal O(1/n^2)$ rate can be  achieved, if $f_{[N]}$ is sufficiently large and can ``overfit'' the training data in a proper sense. 

\subsection{Faster Rate With Over-parameterized Networks} \label{subsec:faster_N}

We now establish 
the faster rate $\DD[f_{S_n}]=\mathcal{O}(1/n^2)$ when the large network is properly over-paramterized, which 
%which, compared with the slower 
outperforms the $\mathcal O(1/n)$ rate achieved by standard gradient descent. 
This provides a theoretical foundation for the widely used approach of learning small networks by pruning from large networks. 

Specifically, our result requires that 
%the large network $f_{[N]}$ is sufficiently large and the neurons in $f_{[N]}$ 
$N$ is sufficiently large and 
the neurons in $f_{[N]}$  are independent and diverse enough 
%the large network $f_{[N]}$ 
%includes a large number of different neurons 
such that %we can reweight the 
we can use a convex combination of 
$N$ neurons %using a convex combination
to perfectly fit the data $\mathcal D_m$, 
even when subject to arbitrary perturbations on the labels with bounded magnitude.  

\begin{assumption} [\textbf{Over-parameterization}] \label{asm:inter_N}
%Assume we 
There exists a constant $\gamma>0$ such that for any $\boldsymbol{\epsilon}=[\epsilon^{(1)},...,\epsilon^{(m)}]\in\R^m$ with $||\boldsymbol{\epsilon}||\le\gamma$, there exists $[\alpha_1,...,\alpha_N]\in\R^N$ (which may depends on $\boldsymbol{\epsilon}$) with $\alpha_i\in[0,1]$ and $\sum_{i=1}^N\alpha_i=1$ such that for all $(\x^{(i)},y^{(i)})$, $i\in[m]$,
\[
\sum_{j=1}^{N}\alpha_{i}\sigma(\th_j,\x^{(i)})= y^{(i)}+\epsilon^{(i)}.
\]
Note that this implies that $\DD^*_{N} = 0$. %\red{\tiny move the last sentence from the assumption?}
\end{assumption}
%Note that Assumption \ref{asm:inter_N} implies $\mathcal L_N^*=0$.
This  roughly requires that the original large network should be sufficiently over-parametrized to have more independent neurons than data points to overfit arbitrarily perturbed labels (with a bounded magnitude). %In fact, 
As we discuss in Appendix~\ref{sec:inter_discuss}, 
Assumption~\ref{asm:inter_N}
can be shown to be equivalent to the interior point condition of Frank-Wolfe algorithm \cite{bach2012equivalence,lacoste2016convergence, chen2012super}. 

\begin{theorem} [\textbf{Faster Rate}] \label{lem:Nnet}
Under assumption \ref{asm:bound_smooth} and \ref{asm:inter_N}, 
for $S_n$ defined in \eqref{equ:step}, 
we have 
\begin{align} \label{equ:2nrate}
\DD[f_{S_n}] = \mathcal{O}(1/(  \min(1,\gamma) n)^{2}).
\end{align}
\end{theorem}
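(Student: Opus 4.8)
The plan is to recast the greedy procedure \eqref{equ:step} as a fixed--step Frank--Wolfe (``herding'') iteration in the $m$-dimensional space of predictions on the training set, and then exploit the interior-point structure supplied by Assumption~\ref{asm:inter_N}. Write $\u_i := (\sigma(\x^{(1)};\th_i),\dots,\sigma(\x^{(m)};\th_i))^\top\in\R^m$ for the output vector of the $i$-th neuron and $\y := (y^{(1)},\dots,y^{(m)})^\top$, so that $\DD[f_S] = \tfrac{1}{2m}\|\bar\u_S-\y\|^2$, where $\bar\u_S$ is the (multiset) average of $\{\u_i : i\in S\}$. Because $f_{S_n\cup i} = \tfrac{n}{n+1}f_{S_n} + \tfrac{1}{n+1}\sigma(\cdot;\th_i)$, the update \eqref{equ:step} is precisely a Frank--Wolfe step of size $1/(n+1)$ whose added vertex is chosen to minimize the (quadratic) loss exactly, rather than its linearization. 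Introducing the scaled residual $\boldsymbol{r}_n := n(\bar\u_{S_n}-\y)$, this becomes the clean recursion $\boldsymbol{r}_{n+1} = \boldsymbol{r}_n + (\u_{i_n^*}-\y)$ with $i_n^* = \argmin_{i\in[N]}\|\boldsymbol{r}_n + \u_i - \y\|^2$; since $\DD[f_{S_n}] = \|\boldsymbol{r}_n\|^2/(2mn^2)$, it suffices to show that $\|\boldsymbol{r}_n\|$ stays of order $1/\min(1,\gamma)$ uniformly in $n$.

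The key estimate is a descent inequality for $\|\boldsymbol{r}_n\|^2$. Expanding the square and using that $i_n^*$ is an exact minimizer,
\begin{align*}
\|\boldsymbol{r}_{n+1}\|^2
&= \min_{i\in[N]}\Big\{\|\boldsymbol{r}_n\|^2 + 2\langle\boldsymbol{r}_n,\u_i-\y\rangle + \|\u_i-\y\|^2\Big\}\\
&\le \|\boldsymbol{r}_n\|^2 + 2\Big\langle\boldsymbol{r}_n,\ \textstyle\sum_i\alpha_i\u_i-\y\Big\rangle + D^2
\end{align*}
for every $\boldsymbol\alpha$ in the simplex, where $D := \max_i\|\u_i-\y\|$ is a finite, $n$-independent constant (of order $\sqrt m$) by Assumption~\ref{asm:bound_smooth}. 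I then apply Assumption~\ref{asm:inter_N} with the perturbation $\boldsymbol\epsilon := -c\,\boldsymbol{r}_n/\|\boldsymbol{r}_n\|$ (and $\boldsymbol\epsilon := 0$ if $\boldsymbol{r}_n=0$), where $c := \min(1,\gamma)$, which is admissible since $\|\boldsymbol\epsilon\|\le\gamma$: it furnishes $\boldsymbol\alpha$ with $\sum_i\alpha_i\u_i = \y+\boldsymbol\epsilon$, so the middle term collapses to $-2c\|\boldsymbol{r}_n\|$ and, with $a_n := \|\boldsymbol{r}_n\|$, we obtain the scalar recursion $a_{n+1}^2 \le a_n^2 - 2c\,a_n + D^2$, together with $a_1 = \min_i\|\u_i-\y\|\le D$. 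Since $a\mapsto a^2-2ca+D^2$ is convex and (using $c\le1$ and, w.l.o.g., $D\ge1$) is $\le (D^2/c)^2$ at both endpoints of $[0,D^2/c]$, that interval is forward invariant; as $a_1\le D\le D^2/c$ lies in it, $a_n\le D^2/c$ for all $n$. Hence $\DD[f_{S_n}] = a_n^2/(2mn^2) \le D^4/(2mc^2n^2) = \mathcal{O}\!\big(1/(\min(1,\gamma)\,n)^2\big)$.

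The main obstacle is this middle step, which needs two ideas fitted together carefully. First, one must recognize that the greedy ``best neuron to add'' move is a fixed-step Frank--Wolfe / herding update in prediction space, so that the per-iteration progress can be compared not just against the single best neuron but against an \emph{arbitrary} convex combination of all $N$ neurons --- this is the hook through which Assumption~\ref{asm:inter_N} enters. Second, Assumption~\ref{asm:inter_N} must be used in its sharpest form, with $\boldsymbol\epsilon$ aimed directly \emph{opposite} the current residual $\boldsymbol{r}_n$, so that the linearized term becomes a true drift $-2c\|\boldsymbol{r}_n\|$ rather than merely something bounded by $\|\boldsymbol\epsilon\|\,\|\boldsymbol{r}_n\|$. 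Once the recursion is in hand the rest is elementary; the only care needed is to cover the regimes $\gamma<1$ and $\gamma\ge1$ uniformly, which is exactly what truncating to $c=\min(1,\gamma)$ accomplishes (for $\gamma\ge1$ one simply gets the unconditional $\mathcal{O}(1/n^2)$). I would also remark that Proposition~\ref{thm:convex} already yields $\DD[f_{S_n}]=\mathcal{O}(1/n)$ here, since Assumption~\ref{asm:inter_N} forces $\DD^*_N=0$; the theorem's real content is the quadratic speedup, which parallels the classical acceleration of Frank--Wolfe when the optimum lies in the interior of the feasible polytope.
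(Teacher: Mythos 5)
Your proof is correct and follows essentially the same route as the paper's: recast the greedy step as a unit-weight herding/Frank--Wolfe update on the prediction polytope, track the scaled residual $\boldsymbol r_n=n(\bar{\boldsymbol u}_{S_n}-\y)$, use Assumption~\ref{asm:inter_N} with a perturbation aimed opposite the residual to get the drift $-2\min(1,\gamma)\|\boldsymbol r_n\|$, and conclude uniform boundedness of $\|\boldsymbol r_n\|$ from the scalar recursion (the paper does this via its Lemma~\ref{lem:zzz}; your forward-invariance of $[0,D^2/c]$ is an equivalent argument). The only cosmetic differences are that you bound the min over neurons by a convex combination directly rather than via the linearization minimizer $\boldsymbol s^{k+1}$, and you work in unscaled prediction space rather than with the $1/\sqrt{m}$-normalized feature maps; neither affects correctness.
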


\subsection{Assumption~\ref{asm:inter_N} Under Gradient Descent} \label{subsec:faster_mf}

In this subsection, we show that Assumption \ref{asm:inter_N} holds with high probability when $N$ is sufficiently large and 
the large network $f_{[N]}$ is trained using gradient descent with a proper random initialization. 
 %networks $f_$
 Our analysis builds on the mean field analysis of neural networks  \cite{song2018mean, mei2019meandimfree}. We introduce the background before we proceed. 

\paragraph{Gradient Dynamics}
 Assume the parameters $\{\vv\theta_i\}_{i=1}^N$ of $f_{[N]}$ are trained using a continuous-time gradient descent (which can be viewed as gradient descent with infinitesimal step size), with a random initialization:
 %that is, 
\begin{align} \label{equ:gd}
\frac{d}{dt} \vv \vartheta_i(t) = \vv g_i(\vv \vartheta(t)),  && 
\vv\vartheta_i(0) \overset{{\text{i.i.d.}}}{\sim} \rho_0, && 
\forall i \in [N], 
%\nabla_{\vv \theta_i} \mathcal L[f_{[N]}(\cdot~; ~\vv \vartheta(t))], 
\end{align}
where $\vv g_i(\vv\vartheta)$ denotes the negative gradient of loss w.r.t.  $\vv\vartheta_i$, 
$$
\vv g_i(\vv\vartheta(t)) = 
\E_{(\vv x,y)\sim \mathcal D_m} [(y - f(\x;~\vv\vartheta(t)) \nabla_{\vv\vartheta_i}\sigma(\x,\vv\vartheta_i(t))],
$$
and $f(\x; ~\vv\vartheta) = \sum_{i=1}^N \sigma(\x,\vv\vartheta_i)/N$. 
Here we initialize $\vv\vartheta_i(0)$ by drawing i.i.d. samples from some distribution $\rho_0$.

\begin{assumption} \label{asm:init}
Assume $\rho_0$ is an absolute continuous distribution on $\R^d$ with a bounded support. Assume the parameters $\{\vv \theta_i\}$ in $f_{[N]}$ are obtained by running \eqref{equ:gd} for some finite time $T$, that is, $\vv\theta_i = \vv\vartheta_i(T)$, $\forall i\in[N]$.
\end{assumption}
 
\paragraph{Mean Field Limit}
%\sout{\red{Our parameterization allows us to use distribution to represent the network. People may not know what parameeterization refers to here. }}
We can represent a neural network using the empirical distribution of the parameters. 
Let $\rho_t^N$ be the empirical measure of $\{\vv\vartheta_i(t)\}_{i=1}^N$ at time $t$, i.e., 
$
\rho_{t}^N :=  \sum_{i=1}^N\delta_{\vv\vartheta_i(t)}/N
$
where $\delta_{\vv\vartheta_i}$ is Dirac measure at $\vv\vartheta_i$. We can represent the network $f(\x;\vv\vartheta(t))$ by $f_{\rho_t^N}:=\E_{\vv\vartheta\sim\rho_t^N}\left [\sigma(\vv\vartheta,\x)\right]$. Also,  $f_{[N]}=f_{\rho_T^N}$ under Assumption \ref{asm:init}.

The mean field analysis amounts to study the limit behavior of the neural network with an infinite number of neurons. 
%allows to study the behavior of networks with an infinite number of neurons. 
Specifically, as $N\to \infty$, 
it can be shown that $\rho_t^N$ weakly converges to a limit distribution  $\rho_t^\infty$, and $f_{\rho_t^\infty}$ can be viewed as the network with infinite number of neurons at training time $t$. It is shown that $\rho_t^\infty$ is characterized by a partial differential equation (PDE)  %Derivations in 
\citep{song2018mean, mei2019meandimfree}: 
 \begin{align} \label{equ:rhoinfty}
 \frac{d}{dt}  \rho_t^\infty = \nabla \cdot (\rho_t^\infty \vv g[\rho_t^\infty]),&&
 \rho_0^\infty = \rho_0, 
 \end{align}
 where $\vv g[\rho_t^\infty](\vv\vartheta) = \E_{(\x,y)\sim \mathcal D_m}[(y-f_\rho(\x)) \nabla_{\vv\vartheta}\sigma(\x, \vv \vartheta)]$, $f_\rho(\x) = \E_{\vv \vartheta\sim \rho}[\sigma(\x; ~\vv \vartheta)]$, 
  and $\nabla \cdot \vv g$ is the divergence operator. %$\nabla \cdot \vv g(\vv\theta) = \sum_{\ell=1}^d \partial_{\theta_\ell} g_\ell(\vv\theta)$, where $\theta_\ell$ and $g_\ell$ denote the $\ell$-th coordinate of $\vv\theta$ and $\vv g$, respectively. 

The mean field theory needs the following smoothness condition on activation to make sure the PDE (\ref{equ:rhoinfty}) is well defined \citep{song2018mean,mei2019meandimfree}.

\begin{assumption} \label{asm:smooth_2}
The derivative of activation function is Lipschitz continuous, i.e., $||\sigma_+'||_\lip < \infty$.
\end{assumption}

It is noticeable that Assumption \ref{asm:smooth_2} does not hold for ReLU. However, as shown in \citet{song2018mean}, empirically, ReLU networks behave very similarly to networks with smooth activation.

A key result of the mean field theory  
says that 
$\rho_{T}^N$ weakly converges to $\rho_T^\infty$ when $N\to \infty$. It implies that $\DD[f_{\rho_T^n}] = \mathcal O(1/n + \varepsilon)$, with $\varepsilon = \DD[f_{\rho_{T}^\infty}]$. 
As shown in \citet{song2018mean}, $\DD[f_{\rho_{T}^{\infty}}]$ is usually a small term giving that the training time $T$ is sufficiently large, under some regularity conditions.  

\paragraph{Over-parameterization of Mean Field Limit}
The key idea of our analysis is: if the infinitely wide 
limit network $f_{\rho_T^\infty}$  can overfit any noise with bounded magnitude (as defined in Assumption \ref{asm:inter_N}), then Assumption \ref{asm:inter_N} holds  for $f_{\rho_T^N}$ with high probability  when $N$ is sufficiently large. 
%the pre-trained network if it is sufficiently large.

\begin{assumption} \label{asm:inter}
There exists $\gamma^* >0$, 
such that for any noise vector $\vv\epsilon =[\epsilon_i]_{i=1}^m$ with $\norm{\vv\epsilon}\leq \gamma^*$, there exists a positive integer  $M$, and  $[\alpha_1,...,\alpha_M]\in\R^M$ with $\alpha_j\in[0,1]$ and $\sum_{j=1}^M\alpha_j=1$ and $\bar{\th}_j\in\supp(\rho_T^\infty),j\in[M]$ such that
\[\sum_{j=1}^{M}\alpha_{j}\sigma(\bar{\th}_{j},\x^{(i)})=y^{(i)}+\epsilon^{(i)},
\]
holds for any $i\in[m]$. Here $M$, $\{\alpha_j, \bar{\th}_j\}$ may depend on $\boldsymbol{\epsilon}$.
\end{assumption}

Assumption~\ref{asm:inter} can be viewed as an infinite variant of Assumption~\ref{asm:inter_N}. It is very mild because $\supp(\rho_T^\infty)$ contains infinitely many neurons and given any $\boldsymbol{\epsilon}$, we can pick an arbitrarily large number of neurons from $\supp(\rho_T^\infty)$ and reweight them to fit the perturbed data. Also, assumption~\ref{asm:inter} implicitly requires a sufficient training time $T$ in order to make the limit network %with infinite neurons 
fit the data well.

\begin{assumption}
[\textbf{Density Regularity}]
\label{asm:density}
For $\forall r_{0}\in(0,\gamma^{*}]$, there exists $p_{0}$ that depends on   $r_0$, such that for every $\th\in\supp(\rho_{T}^{\infty}),$ we have 
$\mathbb{P}_{\th'\sim\rho_{T}^{\infty}}\left(\norm{\th'-\th}\leq r_0 %\in\mathcal{B}\left(\th,r_{0}\right)
\right)\ge p_{0}$.
\end{assumption}
%\paragraph{Remark}This is a weak regularization assumption on the density of the weight of the mean field limit network.

\begin{theorem} \label{thm:main}
Suppose Assumption~\ref{asm:bound_smooth},~\ref{asm:init}, \ref{asm:smooth_2}, \ref{asm:inter} and \ref{asm:density} hold, then for any $\delta>0$, when $N$ is sufficiently large, assumption \ref{asm:inter_N} holds for any $\gamma\le\frac{1}{2}\gamma^*$ with probability at least $1-\delta$, which gives that $\DD[f_{S_n}] = \mathcal{O}(1/(\min(1,\gamma) n)^2)$.
\end{theorem}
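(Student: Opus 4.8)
The conclusion decomposes into two parts, and the second half — turning the over-parameterization property of the \emph{finite} network $f_{[N]}$ into the fast rate $\DD[f_{S_n}] = \mathcal{O}(1/(\min(1,\gamma) n)^2)$ — is already supplied by Theorem~\ref{lem:Nnet}. So the whole task is to show that, for $N$ large enough, Assumption~\ref{asm:inter_N} holds for $\{\th_i\}_{i=1}^N = \{\vv\vartheta_i(T)\}_{i=1}^N$ with parameter $\gamma = \tfrac12\gamma^*$ and probability at least $1-\delta$; note that Assumption~\ref{asm:inter_N} with this $\gamma$ automatically implies the same statement for every $\gamma \le \tfrac12\gamma^*$, and Theorem~\ref{lem:Nnet} then yields the claimed rate for all such $\gamma$.

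I would first recast both interpolation conditions geometrically. Writing $\Phi(\th) := (\sigma(\th,\x^{(1)}),\ldots,\sigma(\th,\x^{(m)})) \in \R^m$ and $\y := (y^{(1)},\ldots,y^{(m)})$, Assumption~\ref{asm:inter} says the closed ball $B(\y,\gamma^*)$ lies inside $\mathrm{conv}\,\Phi(\supp(\rho_T^\infty))$, and Assumption~\ref{asm:inter_N} with $\gamma = \tfrac12\gamma^*$ is exactly $B(\y,\tfrac12\gamma^*) \subseteq \mathrm{conv}\{\Phi(\th_i)\}_{i=1}^N$. Under Assumptions~\ref{asm:bound_smooth} and \ref{asm:init}, the trajectories $\vv\vartheta_i(t)$ for $t\in[0,T]$ and the set $\supp(\rho_T^\infty)$ all lie in one fixed bounded set (finite-time gradient flow from a compactly supported $\rho_0$), on which $\th\mapsto\Phi(\th)$ is $L$-Lipschitz for a finite $L$ (using $\norm{\sigma_+}_{\lip},\norm{\sigma_+}_\infty\le c_1$, $\norm{\x^{(i)}}\le c_1$, and boundedness of the $b$-coordinate of $\th$).

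The transfer from the limit to the finite network would go through a net argument. Fix a finite $\eta$-net $\{p_k\}_{k=1}^K$ of the sphere $\partial B(\y,\gamma^*)$; by Assumption~\ref{asm:inter} and Carath\'{e}odory's theorem, each $p_k$ equals $\sum_{j}\alpha_{k,j}\Phi(\bar\th_{k,j})$ for at most $m{+}1$ points $\bar\th_{k,j}\in\supp(\rho_T^\infty)$ and convex weights $\alpha_{k,j}$. Let $\bar\th_1,\ldots,\bar\th_Q$ list the finitely many points arising this way. The crucial probabilistic claim is: for any prescribed $r_0\in(0,\gamma^*]$, once $N$ is large, with probability at least $1-\delta$ every $\bar\th_q$ has some $\th_{i_q}$ ($i_q\in[N]$) within distance $r_0$. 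Granting it, set $\tilde p_k := \sum_j \alpha_{k,j}\Phi(\th_{i_{k,j}})\in\mathrm{conv}\{\Phi(\th_i)\}$; then $\norm{\tilde p_k - p_k}\le L r_0$, so $\{\tilde p_k\}$ is an $(\eta + L r_0)$-net of $\partial B(\y,\gamma^*)$ and hence $\mathrm{conv}\{\Phi(\th_i)\} \supseteq \mathrm{conv}\{\tilde p_k\} \supseteq B(\y,\gamma^* - \eta - L r_0)$. Choosing $\eta$ and $r_0$ so that $\eta + L r_0 \le \tfrac12\gamma^*$ gives exactly $B(\y,\tfrac12\gamma^*)\subseteq\mathrm{conv}\{\Phi(\th_i)\}$, i.e.\ Assumption~\ref{asm:inter_N}.

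The remaining probabilistic claim is where the mean field analysis enters, and I expect it to be the main obstacle. Since $Q$ is finite it suffices, after a union bound over $q$ (shrinking $\delta$ to $\delta/Q$), to treat a single fixed $\bar\th\in\supp(\rho_T^\infty)$ and show $\min_{i\in[N]}\norm{\th_i-\bar\th}\le r_0$ with high probability. By Assumption~\ref{asm:density} (applied at radius $r_0/4$) we have $\rho_T^\infty(B(\bar\th,r_0/4))\ge p_0>0$. Although $\{\th_i\}=\{\vv\vartheta_i(T)\}$ are not i.i.d.\ samples of $\rho_T^\infty$ but gradient-flow images of i.i.d.\ $\rho_0$-draws, the mean field theory of \citet{song2018mean, mei2019meandimfree} (valid under Assumptions~\ref{asm:init} and \ref{asm:smooth_2}) gives that the empirical measure $\rho_T^N = \frac1N\sum_i\delta_{\th_i}$ converges weakly to $\rho_T^\infty$ as $N\to\infty$. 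Testing $\rho_T^N$ against a continuous function squeezed between $\mathbf{1}_{B(\bar\th,r_0/4)}$ and $\mathbf{1}_{B(\bar\th,r_0/2)}$ then yields $\rho_T^N(B(\bar\th,r_0/2))\ge p_0/2$ with probability tending to $1$, so at least one $\th_i$ lies in $B(\bar\th,r_0/2)\subseteq B(\bar\th,r_0)$. The care needed here is in (i) extracting this weak-convergence-of-random-empirical-measures statement at the required quantitative level under exactly these assumptions, and (ii) making the ``$N$ large enough'' thresholds uniform over the finitely many $\bar\th_q$; everything else is elementary convex geometry. Assembling the pieces, Assumption~\ref{asm:inter_N} (hence also $\DD^*_N = 0$) holds with probability at least $1-\delta$ for large $N$, and Theorem~\ref{lem:Nnet} then gives $\DD[f_{S_n}] = \mathcal{O}(1/(\min(1,\gamma)n)^2)$ for all $\gamma\le\tfrac12\gamma^*$.
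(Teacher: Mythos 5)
Your proposal is correct, and it shares the paper's overall strategy: recast Assumptions~\ref{asm:inter} and \ref{asm:inter_N} as ball containments in the convex hulls $\M$ and $\M_N$ of feature maps, show that $\M_N$ approximates $\M$ by exploiting the closeness of the trained particles $\{\vv\vartheta_i(T)\}$ to $\supp(\rho_T^\infty)$ (propagation of chaos plus the density regularity of Assumption~\ref{asm:density}), conclude $\mathcal{B}(\y,\tfrac{1}{2}\gamma^*)\subseteq\M_N$, and invoke Theorem~\ref{lem:Nnet}. The execution of the transfer step, however, is genuinely different. The paper first establishes a \emph{uniform} approximation $\sup_{\th\in\supp(\rho_T^\infty)}\lVert\bp(\th)-\bp^N(\th)\rVert\le\gamma^*/6$ (Lemma~\ref{techlem:neighbor3}, which rests on a VC-dimension covering bound over all balls in parameter space, Lemma~\ref{techlem:neighbor}), and then derives the ball containment through the supporting-hyperplane contradiction of Steps 1--4 in the proof of Lemma~\ref{lem:inter_mf}. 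You discretize first: an $\eta$-net of the sphere $\partial\mathcal{B}(\y,\gamma^*)$ combined with Carath\'eodory reduces the problem to approximating \emph{finitely many} support points, so a plain union bound replaces the uniform VC argument, and you finish with the elementary fact that if a convex set contains an $\varepsilon$-net of $\partial\mathcal{B}(\y,\gamma^*)$ then it contains $\mathcal{B}(\y,\gamma^*-\varepsilon)$ (one line via a separating hyperplane). Your route buys a cleaner convex-geometry step and a weaker probabilistic requirement (pointwise rather than uniform approximation of the support); the cost is that the number of points to control is the size of a sphere net times $m+1$, which is exponential in $m$, whereas the paper's uniform bound enters only through the VC dimension $d+1$ --- immaterial here since both arguments only assert ``$N$ sufficiently large.'' The single ingredient you flag as delicate, namely that $\rho_T^N$ assigns mass at least $p_0/2$ to a small ball around each fixed support point with high probability, is exactly what the paper obtains from the coupling bound $\max_{i}\lVert\bar{\th}_i(T)-\vv\vartheta_i(T)\rVert = \mathcal{O}(\sqrt{\log N/N})$ of \citet{mei2019meandimfree} together with i.i.d.\ concentration for the coupled particles, so no gap remains.
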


Theorem \ref{thm:main} shows that if the pre-trained network is sufficiently large, the loss of the pruned network decays at a faster rate. Compared with Proposition \ref{thm:convex}, it highlights the importance of using a large pre-trained network for pruning.

\begin{figure}[t]
%{r}{4cm}
    \centering
    \includegraphics[width=0.35\textwidth]{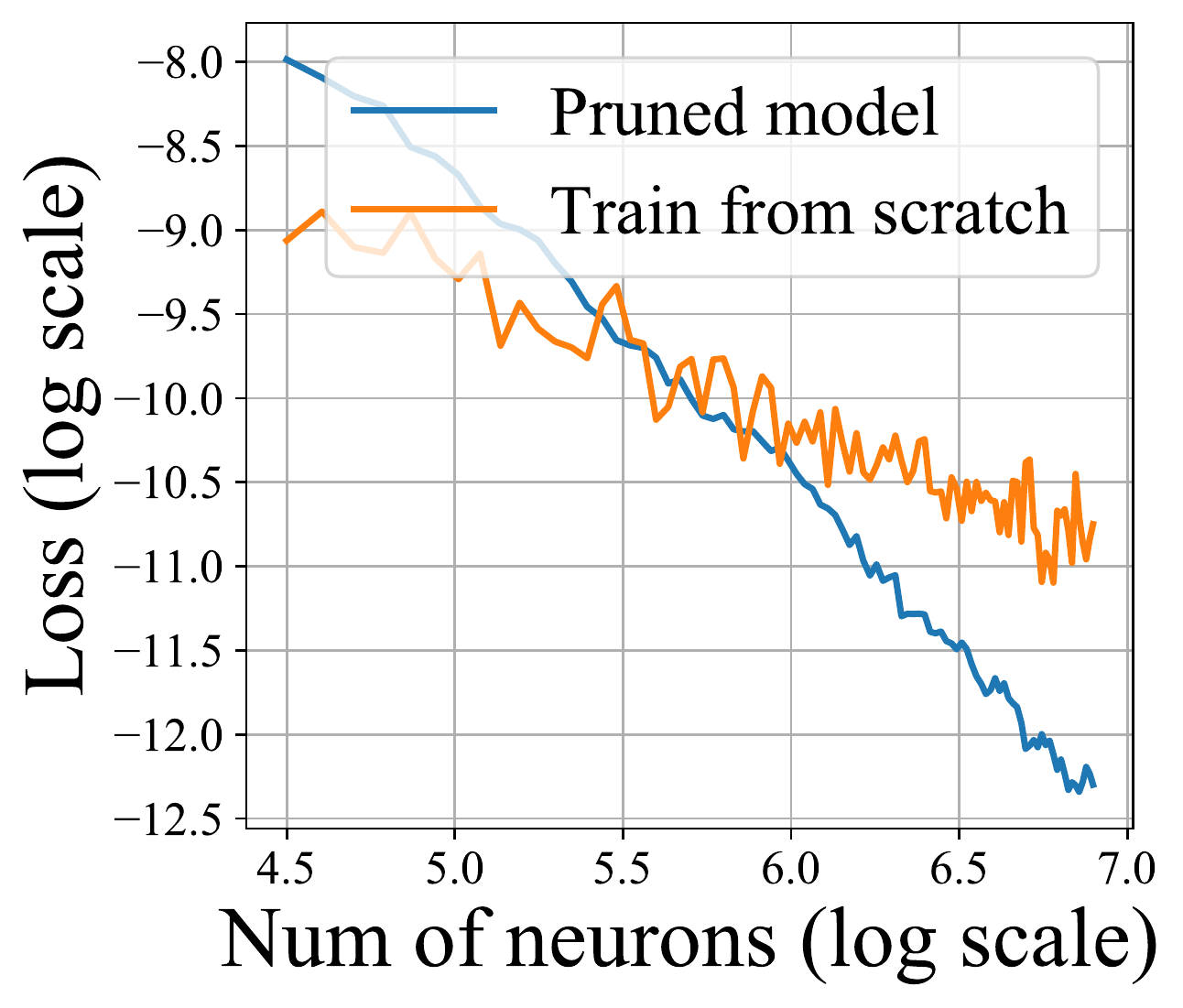}
    \caption{Comparison of loss of the pruned network and train-from-scratch network with varying sizes. Both the loss and number of neurons are in logarithm scale.} \label{fig:toyrate}
\end{figure}

\iffalse 
In mean field setting, it has been shown \citep{song2018mean, mei2019meandimfree} that directly training a network $f_n$ with $n$ neurons gives 
$
\DD[f_n] = \mathcal{O}(1/n),
$
assuming the network with infinite neurons is able to overfit the data.
In comparison, our greedy selection algorithm gives a much faster convergence rate of $\mathcal{O}(1/n^2)$. Our theoretical findings provides an explanation on the success of pruning in obtaining a compact small network.
\fi 

%We numerically verify 
\paragraph{Pruning vs. GD: Numerical Verification of the Rates} 
%\citet{song2018mean,mei2019meandimfree} show the $O(1/n)$ loss of training network with $n$ neurons from scratch by gradient descent (see Appendix \ref{apx:toy} for details). 
We numerically verify the fast $\mathcal O(1/n^2)$ rate
in (\ref{equ:2nrate}) and the $\mathcal O(1/n)$ rate of gradient descent by \citet{song2018mean,mei2019meandimfree} (when $\varepsilon$ term is very small). %\red{when ploting the figure, the $\epsilon$ term is not dropped, 
Given some simulated data, we first train a large network $f_{[N]}$ with $N=1000$ neurons by gradient descent with random initialization. We then apply our greedy selection algorithm to find subnetworks with different sizes $n$. We also directly train  networks of size $n$ with gradient descent. 
%with different number of neurons. 
See Appendix~\ref{apx:toy} for more details. %of this toy experiment.
Figure~\ref{fig:toyrate} plots the the loss $\DD[f]$ and the number of neurons $n$ of the pruned network and the network trained from scratch. This empirical result matches our $\mathcal O(1/n^2)$ rate in Theorem~\ref{thm:main}, and the $\mathcal O(1/n)$ rate of the gradient descent. 
%theoretical result.

\subsection{Pruning Randomly Weighted Networks}
A line of recent empirical works \citep[e.g.,][]{frankle2018lottery,ramanujan2019s} 
have demonstrated a \emph{strong lottery ticket hypothesis}, which 
%shows that we are able 
shows that it is possible to 
find a subnetwork with good accuracy inside a large  network with random weights without pretraining. 
Our analysis is also applicable to this case. 
%the case of pruning random network. 
Specifically, 
the $\mathcal L[f_{S_n}] = \mathcal O(1/n^2)$ bound in Theorem~\ref{thm:main} holds even when the weights $\{\vv \theta_i\}$ of the large network is i.i.d. drawn from the initial distribution $\rho_0$, without further training.  This is because Theorem~\ref{thm:main} applies to any training time $T$, including $T= 0$ (no training). {See Appendix \ref{apx:random} for a more thorough discussion}.

\subsection{Greedy Backward Elimination}
To better illustrate the advantages of the forward selection approach over backward  elimination (see Figure~\ref{fig:vs}), 
it is useful to consider the 
backward elimination counterpart of our method  which minimizes the same loss as our method, but from the opposite direction. 
That is, it starts from the full network $S_{0}^{\text{B}}:=[N]$, and sequentially  deletes neurons via 
\[
S_{n+1}^{\text{B}}\leftarrow S_{n}^{\text{B}}\setminus \{i_{n}^{*}\},\ \ \ \text{where}\ i_{n}^{*}=\underset{i\in S_{n}^{\text{B}}}{\arg\min}\DD[f_{S_{n}^{\text{B}}\setminus \{i\}}].
\]
As shown in Appendix \ref{apx: gbe}, 
this backward elimination 
does not enjoy similar $\mathcal O(1/n)$ or $\mathcal O(1/n^2)$ rates as forward selection and simple counter examples can be constructed easily.  
Additionally, 
%our empirical results 
Table ~\ref{tbl:gbe_gfsddfd} in Appendix~\ref{apx: gbe}  
shows that the forward selection outperforms  this backward elimination on both ResNet34 and  MobileNetV2 for  ImageNet.

\subsection{Further Discussion}
%\red{As far as we know, we are the first work that?} 
To the best of our knowledge, 
our work provides the first rigorous theoretical justification  
that 
pruning from over-parameterized models 
\emph{outperforms} direct training of small networks from scratch using gradient descent, under rather practical assumptions.
%gives a thorough theory \red{showing that network pruning is better than directly training in learning small compact network} with practical assumptions. 
However, there still remain gaps between theory and practice that deserve further investigation in future works. 
Firstly, we only analyze the simple two-layer networks but we believe our theory can be generalized to deep networks with refined analysis and more complex theoretical framework such as deep mean field theory \citep{araujo2019mean,nguyen2020rigorous}. We conjecture that pruning deep network gives $\mathcal{O}(1/n^2)$ rate with the constant depending on the Lipschitz constant of the mapping from feature map to output. %\red{maybe just }
Secondly, as we only analyze the two-layer networks, our theory cannot characterize whether pruning finds good structure of deep networks, as discussed in \citet{liu2018rethinking}. Indeed, theoretical works on how network architecture influences  the performance are  still largely missing. Finally, some of our analysis is built on the mean field theory, which is a special parameterization of network. It is also of interest to generalize our theory to other parameterizations, such as these based on neural tangent kernel \citep{jacot2018neural, du2018gradient_1}.

\begin{algorithm}[t]
\caption{Layer-wise Greedy Subnetwork Selection} \label{alg0}
    \begin{algorithmic}
        \STATE{\textbf{Goal}: Given a pretrained network $f_{\text{Large}}$ with $H$ layers, find a subnetwork $f$ with high accuracy.}
        \STATE Set $f  = f_{\text{Large}}$. 
        \FOR{Layer $h \in [H]$ (From input layer to output layer)}
	       \STATE{Set $S = \emptyset$}
	       \WHILE{Convergence criterion is not met}
		      \STATE{Randomly sample a mini-batch data $\hat{\D}$}
		      \FOR{filter (or neuron) $k \in [N_h]$}
			     \STATE{$S'_k\leftarrow S \cup \{k\}$}
			     \STATE{Replace layer $h$  of $f$ by $\sum_{j\in [S'_k]}\sigma(\boldsymbol{\theta}_{j},\z^{\text{in}})/\left|S'_k\right|$}
			     \STATE{Calculate its loss 
			     $\ell_k$ 
			     on mini-batch data $\hat{\D}$.  %and denote it by 
			     } 
			     %\STATE{$S\leftarrow S \setminus \{k\}$}
		      \ENDFOR
		      \STATE{$S\leftarrow S\cup \{k^{*}\}$, where $k^{*}=\underset{k\in[N_{h}]}{\arg\min}\ \ell_{k}$}
	       \ENDWHILE
	       \STATE{Replace layer $h$ of $f$ by  $\sum_{j\in [S]}\sigma(\boldsymbol{\theta}_{j},\z^{\text{in}})/\left|S\right|$}
        \ENDFOR
        \STATE{Finetune the subnetwork $f$.}
    \end{algorithmic}
\end{algorithm}

\section{Practical Algorithm and Experiments} \label{sec:algo}
%\subsection{Practical Extension to Multi-layer  Networks}

%\textbf{Practical Algorithm}\space\space
\paragraph{Practical Algorithm}
We propose to apply the greedy selection strategy in a layer-wise fashion in order to prune neural networks with multiple layers. Assume we have a pretrained deep neural network with $H$ layers, whose $h$-th layer 
%Suppose the layer $h$ 
contains $N_h$ neurons and defines a mapping $\sum_{j\in[N_{h}]}\sigma(\boldsymbol{\theta}_{j},\z^{\text{in}})/N_{h}$, where $\z^{\text{in}}$ denotes the input of this layer. 
%Our algorithm greedily selects the neuron that minimizes the loss at most in this layer, in a similar iterative way as (\ref{equ:step}). 
To extend  the greedy subnetwork selection to deep networks, we propose to prune the layers sequentially, from the input layer to the output layer. 
For each layer, 
we first remove all the neurons in that layer, and gradually add the best neuron back that yields the largest decrease of the loss, similar to the updates in \eqref{equ:step}. 
After finding the subnetwork for all the layers, we further finetune the pruned network,
training it with stochastic gradient descent  
using the weight of original network as initialization. %using stochastic gradient descent over the training set. 
This allows us to inherit the accuracy and information in the pruned subnetwork, because finetuning can only decrease the loss over the initialization.
We summarize the detailed procedure of our method in Algorithm \ref{alg0}. Code for reproducing can be found at \url{https://github.com/lushleaf/Network-Pruning-Greedy-Forward-Selection}.

\begin{table*}[h]
    \centering
    \begin{tabular}{l|l|c|c|c}
        \hline
        Model & Method & Top-1 Acc & Size (M) & FLOPS \\
        \hline
        \multirow{8}{*}{ResNet34} & Full Model \citep{he2016resnet}  & 73.4 & 21.8 & 3.68G \\
        \cline{2-5}
        & \citet{li2016pruning} & 72.1 & - & 2.79G \\
        & \citet{liu2018rethinking} & 72.9 & - & 2.79G \\
        % \cline{2-5}
        & \citet{dong2017more} & 73.0 & - & 2.75G \\
        & Ours & \bf{73.5} & \bf{17.2} & \bf{2.64G} \\
        \cline{2-5}
        & SFP~\citep{he2018soft} & 71.8 & - & 2.17G \\
        & FPGM~\citep{he2019filter} & 72.5 & - & 2.16G \\
        % \cline{2-5}
        & Ours & \bf{72.9} & \bf{14.7} & \bf{2.07G} \\
        \hline
        \multirow{18}{*}{MobileNetV2} & Full Model~\citep{sandler2018mobilenetv2} & 72.0 & 3.5 & 314M\\
        & Ours & \bf{71.9} & \bf{3.2} & \bf{258M} \\
        \cline{2-5}
        & LeGR~\citep{chin2019legr} & 71.4 & - & 224M \\
        & Uniform~\citep{sandler2018mobilenetv2} & 70.4 & 2.9 & 220M \\
        & AMC~\citep{he2018amc} & 70.8 & 2.9 & 220M \\
        & Ours & \bf{71.6} & \bf{2.9} & \bf{220M} \\
        & Meta Pruning \citep{liu2019metapruning} & 71.2 & - & 217M \\
        & Ours & \bf{71.2} & \bf{2.7} & \bf{201M} \\
        \cline{2-5}
        & ThiNet~\citep{luo2017thinet} & 68.6 & - & 175M \\
        & DPL~\citep{zhuang2018dpl} & 68.9 & - & 175M \\
        & Ours & \bf{70.4} & \bf{2.3} & \bf{170M} \\ 
        \cline{2-5}
        & LeGR~\citep{chin2019legr} & 69.4 & - & 160M \\
        & Ours & \bf{69.7} & \bf{2.2} & \bf{152M} \\
        \cline{2-5}
        & Meta Pruning \citep{liu2019metapruning} & 68.2 & - & 140M \\
        & Ours & \bf{68.8} & \bf{2.0} & \bf{138M} \\
        \cline{2-5}
        & Uniform~\citep{sandler2018mobilenetv2} & 65.4 & - & 106M \\
        & Meta Pruning \citep{liu2019metapruning} & 65.0 & - & 105M \\
        & Ours & \bf{66.9} & \bf{1.9} & \bf{107M} \\
        \hline
        \multirow{3}{*}{MobileNetV3-Small} &
        Full Model~\citep{howard2019searching} & 67.5 & 2.5 & 64M \\
        \cline{2-5}
        & Uniform~\citep{howard2019searching} & 65.4 & 2.0 & 47M \\
        & Ours & \bf{65.8} & 2.0 & \bf{49M} \\
        \hline
        \multirow{3}{*}{ProxylessNet-Mobile} & Full Model~\citep{cai2018proxylessnas} & 74.6 & 4.1 & 324M \\
        \cline{2-5}
        & Uniform~\citep{cai2018proxylessnas} & 72.9 & 3.6 & 240M \\
        & Ours & \bf{74.0} & 3.4 & \bf{232M} \\
        \hline
    \end{tabular}
    \caption{Top1 accuracies for different benchmark models, e.g.  ResNets~\citep{he2016resnet}, MobileNetV2~\citep{sandler2018mobilenetv2}, MobileNetV3-small \citep{howard2019searching} and ProxylessNet \citep{cai2018proxylessnas} on ImageNet2012~\citep{deng2009imagenet}.}
    \label{tab:imagenet}
    \vspace{-0.5cm}
\end{table*}

%\textbf{Empirical Results}\space\space
\paragraph{Empirical Results}
We first apply the proposed algorithm to prune various models, e.g. ResNet~\citep{he2016resnet}, MobileNetV2~\citep{sandler2018mobilenetv2}, MobileNetV3~\citep{howard2019searching} and ProxylessNet~\citep{cai2018proxylessnas} for ImageNet~\citep{deng2009imagenet} classification.
We also show the experimental results on CIFAR-10/100 in the appendix.
Our results are summarized as follows: 

i) Our greedy selection method  
consistently outperforms the prior arts on network pruning on learning small and accurate networks with high computational efficiency. 
%on various tasks, architectures, and datasets.

ii) \!
Finetuning pruned subnetworks of neural architectures  
  (e.g., MobileNetV2/V3) consistently  outperforms re-training them from new random initialization,  
violating the results of \citet{liu2018rethinking}.

iii) %Thirdly, 
Increasing the size of the pre-trained large networks 
 improves the performance of the pruned subnetworks, 
 highlighting the importance of pruning from \emph{large} models.

\subsection{Finding Subnetworks on ImageNet}

%\paragraph{DataSet}
We use ILSVRC2012, a subset of ImageNet~\citep{deng2009imagenet} % dataset, 
which consists of about 1.28 million training images 
and 50,000 validation images with 1,000 different classes.

\begin{figure}[t]
\begin{center}
\renewcommand{\tabcolsep}{2pt}
\begin{tabular}{c}

\raisebox{2.25em}{\rotatebox{90}{{\large Test Accuracy}}}~~
\includegraphics[width =0.45\textwidth]{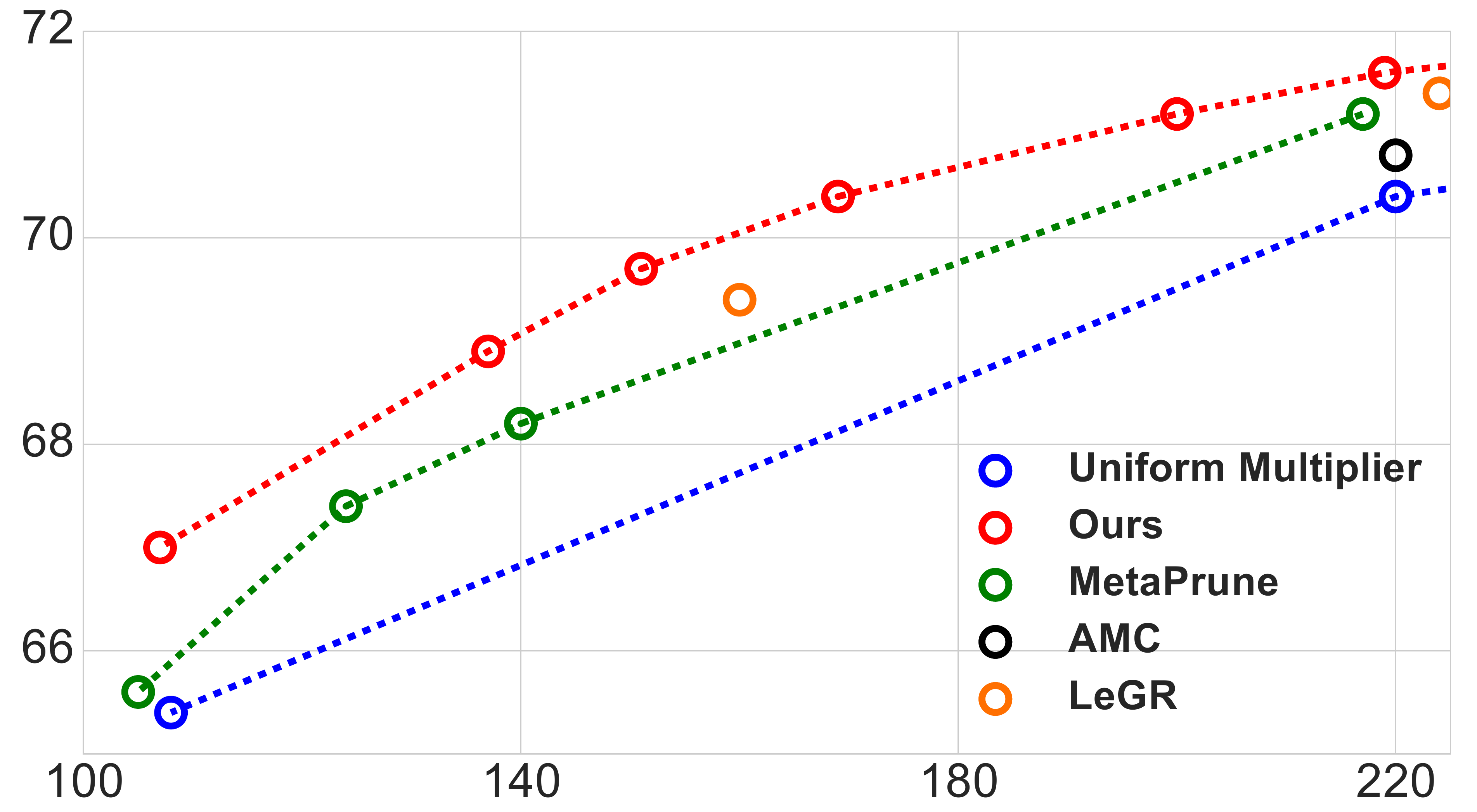}\\
~~~~FLOPs (M) \\
\end{tabular}
\end{center}
\caption{\small 
After applying different pruning algorithms to MobileNetV2  on ImageNet, we display the top1 accuracy of different methods.
It is obvious that our algorithm can consistently outperform all the others under any FLOPs.
}
\label{fig:imagenet}
\end{figure}

\paragraph{Training Details}
We evaluate each neuron using a mini-batch of training data to select the next one to add, as shown in Algorithm \ref{alg0}. 
We stop adding new neurons when the gap between the current loss and the loss of the original pre-trained model is smaller than $\epsilon$. 
%{original loss})$. 
We vary $\epsilon$ to get pruned models with different sizes.

During finetuning, 
we use the standard SGD optimizer with Nesterov momentum 0.9 and
weight decay $5 \times 10^{-5}$. 
For ResNet, we use a fixed learning rate $2.5\times10^{-4}$.
For the other architectures,
following the original settings~\citep{cai2018proxylessnas, sandler2018mobilenetv2},
we decay learning rate using cosine schedule ~\citep{loshchilov2016cosine} starting from 0.01. We finetune subnetwork for 150 epochs with batch size 512 on 4 GPUs. We resize images to $224 \times 224$ resolution and adopt the standard data augmentation scheme (mirroring and shifting). % ~\citep{sandler2018mobilenetv2, cai2018proxylessnas}.

\paragraph{Results}
%\textbf{Results}\space\space
Table ~\ref{tab:imagenet} reports the top1 accuracy, FLOPs and model size \footnote{ All the FLOPS and model size reported in this paper is calculated by \url{https://pypi.org/project/ptflops}.} of subnetworks pruned from the full networks.
We first test our algorithm on two standard benchmark models, 
ResNet-34 and MobileNetV2.
We further apply our algorithm to several recent proposed models e.g., ProxylessNet, MobileNetV3-Small.

%\textbf{ResNet-34}\space\space
\paragraph{ResNet-34}
Our algorithm outperforms all the prior results on ResNet-34.
% On ResNet-34, 
We obtain an even better top1 accuracy (73.4\% vs. 73.5\%) than the full-size network
while reducing the FLOPs from 3.68G to 2.64G.
%When the FLOPs constraint is set to $< 2.2$G, 
We also obtain a model with $72.9\%$ top1 accuracy and $2.07$G FLOPs, which has higher accuracy but lower FLOPs than previous works.

%\textbf{MobileNetV2}\space\space
\paragraph{MobileNetV2}
Different from ResNet and other standard structures, 
MobileNetV2 on ImageNet is known to be hard to prune using most traditional pruning algorithms~\citep{chin2019legr}.
As shown in Table~\ref{tab:imagenet}, 
compared with the `uniform baseline', which uniformly reduces the number of channels on each layer, most popular algorithms fail to improve the performance by a large margin.
In comparison, our algorithm improves the performance of small-size networks by a significant margin.
As shown in Table ~\ref{tab:imagenet},
the subnetwork with 245M FLOPs obtains 71.9\% top1 accuracy, 
%only a slight drop from 
which matches closely with the 72.0\% accuracy of the full-size network.
Our subnetwork with 151M FLOPs achieves 69.7\% top1 accuracy, 
improving the previous state-of-the-art of 69.4\% top1 accuracy with 160M FLOPs.
As shown in Figure~\ref{fig:imagenet},   
our algorithm consistently outperforms all the other baselines under all FLOPs.
The improvement of our method on 
the low FLOPs region is particularly significantly. 
For example, when 
limited to 106M FLOPs, 
we improve the 65.0\% top1 accuracy of Meta Pruning to 66.9\%.

%\textbf{ProxylessNet-Mobile and MobileNetV3-Small}\space\space
\paragraph{ProxylessNet-Mobile and MobileNetV3-Small}
We further experiment on two recently-proposed architectures, 
ProxylessNet-Mobile and MobileNetV3-Small.
As shown in Table~\ref{tab:imagenet}, we consistently outperform the `uniform baseline'.
For MobileNetV3-Small, we improve the $65.4\%$ top1 accuracy to $65.8\%$ when the FLOPs is less than 50M FLOPs. For ProxylessNet-Mobile, we enhance the $72.9\%$ top1 accuracy to $74.0\%$ when the FLOPs is under 240M.

\subsection{Rethinking the Value of Finetuning}
\label{sec:rethinking}
Recently, ~\citet{liu2018rethinking} finds that 
for ResNet, VGG %~\citep{simonyan2014vgg} 
and other standard structures on ImageNet, 
re-training the weights of the pruned structure
from new random initialization can achieve better performance than finetuning.
However, we find that this claim does not hold for mobile models, 
such as MobileNetV2 and MobileNetV3.
In our experiments, we use the same setting of \citet{liu2018rethinking}
for re-training from random initialization.

\begin{table}[htbp!]
    \centering
    \begin{tabular}{l|l|c|c}
        \hline
        Models & FLOPs & Re-training (\%) & Finetune (\%) \\
        \hline
        MobileNetV2 & 220M & 70.8 & 71.6 \\
        \hline
        MobileNetV2 & 170M & 69.0 & 70.4 \\
        \hline
        MobileNetV3 & 49M & 63.2 & 65.8 \\
        \hline
    \end{tabular}
    \caption{Top1 accuracy on MobileNetV2 and MobileNetV3-small on ImageNet.
    ``Scratch'' denotes training the pruned model from scratch. We use the Scratch-B setting in \citet{liu2018rethinking} for training from scratch. }
    \label{tab:rethink}
\end{table}

We compare finetuning and re-training of the pruned MobileNetV2 with 219M and 169M Flops.
As shown in Table~\ref{tab:rethink}, finetuning outperforms 
%training from scratch 
re-training by a large margin.
For example, for the 169M FLOPs model, 
re-training decreases the top1 accuracy from 70.4\% to 69.0\%.
This empirical evidence demonstrates the importance of using the weights learned by the large model to initialize the pruned model.

We conjecture that the difference between our findings and \citet{liu2018rethinking} might come from several reasons.
Firstly, for large architecture such as VGG and ResNet, the pruned model is still large enough (e.g. as shown in Table~\ref{tab:imagenet}, FLOPs $>$ 2G) to be optimized from scratch.
However, this does not hold for the pruned mobile models, which is much smaller.
Secondly, \citet{liu2018rethinking} 
mainly focus on sparse regularization based pruning methods such as \citet{han2015learning, li2016pruning, he2017channel}. In those methods, the loss used for training the large network has an extra strong regularization term, e.g., channel-wise $L_p$ penalty. However, when re-training the pruned small network, the penalty is excluded. This gives inconsistent loss functions. As a consequence, the weights of the large pre-trained network may not  be suitable for finetuning the pruned model. In comparison, our method uses the same loss for training the re-trained large model and the pruned small network, both without regularization term. A more comprehensive understanding of this issue is valuable to the community, which we leave as a future work.

However, we believe that
a more comprehensive understanding of finetuning is valuable to the community, which we leave as a future work.

\begin{table}[htbp!]
    \centering
    \begin{tabular}{l|ccc}
        \hline
        & \multicolumn{3}{c}{Large $N \longrightarrow$ Small $N$} \\
        \hline
        Original FLOPs (M) & 320 & 220 & 108 \\
        \hline
        Pruned FLOPs (M) & 96 & 96 & 97 \\
        \hline
        Top1 Accuracy (\%) & 66.2 & 65.6 & 64.9 \\
        \hline
    \end{tabular}
    \caption{We apply our algorithm to get three pruned models with similar FLOPs from full-size MobileNetV2, MobileNetV2$\times$0.75 and MobileNetV2$\times$0.5.}
    \label{tab:LargeVsSmall}
    \vspace{-0.5cm}
\end{table}

\subsection{On the Value of Pruning from Large Networks}
\label{sec:large}
Our theory suggests it is better to prune from a larger model, as discussed in Section~\ref{sec:theory}.
To verify, we apply our method to MobileNetV2 with different sizes, including MobileNetV2 (full size), MobileNetV2$\times$0.75 and MobileNetV2$\times$0.5 \citep{sandler2018mobilenetv2}. We keep the FLOPs of the pruned models almost the same and compare their performance. As shown in Table~\ref{tab:LargeVsSmall}, the pruned models from larger original models give better performance. For example, the 96M FLOPs pruned model from the full-size MobileNetV2 obtains a top1 accuracy of $66.2\%$ while the one pruned from MobileNetV2$\times$0.5 only has $64.9\%$. % accuracy.

\section{Related Works}
\textbf{Structured Pruning}
A vast literature exists on  
\emph{structured} pruning 
\citep[e.g.,][]{han2016eie}, which prunes neurons, channels or other units of neural networks.  
Compared with 
weight pruning~\citep[e.g.,][]{han2015compress}, 
which specifies the connectivity of neural networks, 
structured pruning 
is more realistic as 
it can compress neural networks without dedicated hardware or libraries.
Existing methods 
prune the redundant neurons based on different criterion, 
including the norm of the weights 
\citep[e.g.,][]{liu2017slim, zhuang2018dpl, li2016pruning}, 
 feature reconstruction error of the next or final layers
\citep[e.g.,][]{he2017channel, yu2018nisp, luo2017thinet},
or gradient-based sensitivity measures 
\citep[e.g.,][]{baykal2019sipping, zhuang2018dpl}.  
%Our method is based 
Our method is designed to directly minimize the final loss, and yields both better practical performance and theoretical guarantees. 
%\citet{yu2018nisp} and \citet{luo2017thinet}
%first propose the %instead of the weight pruning~\citep{han2015compress}, which is more realistic as it can compress neural networks without dedicated hardware or libraries.

%\blue{Delete:?}
\textbf{Forward Selection vs. Backward Elimination}
Many of the popular conventional network pruning methods are based on 
backward elimination of redundant neurons \citep[e.g.][]{liu2017slim, li2016pruning, yu2018nisp}, 
and fewer algorithms are based forward selection like our method \citep[e.g.][]{zhuang2018dpl}.
Among the few exceptions, \citet{zhuang2018dpl} propose a greedy channel selection algorithm similar to ours, 
but their method is based on minimizing a gradient-norm based sensitivity measure (instead of the actual loss like us), and yield no theoretical guarantees. Appendix~\ref{apx: gbe} discusses the theoretical and empirical advantages of forward selection over backward elimination.
%We are the first work to highlight that the forward selectionmay yield a better alternative than backward elimination.
%\blue{I am not sure about this claim. We do no theory nor sufficient experiment comparing the forward and backward selection. I suggest to delete this part and discuss our work with \citet{zhuang2018dpl} in the above paragraph.}

\textbf{Sampling-based Pruning}
Recently, 
a number of works  
\citep{baykal2018data, liebenwein2019provable, baykal2019sipping,mussaydata2020} 
proposed to prune networks based  on variants of (iterative) random sampling according to certain sensitivity score.  
These methods can provide concentration bounds 
on the difference of output between the pruned networks and the full networks, which may yield a bound of $\mathcal O(1/n+\DD[f_{[N]}])$ with a simple derivation. % \red{Is it necessary to emph mean square here?}. 
Our method uses a simpler greedy deterministic selection strategy and achieves better rate than random sampling in the  overparameterized cases 
%\red{In the general case, our rate is not faster, so remove the 'especially'?}.  
In contrast, sampling-based pruning may not yield the fast $\mathcal O(1/n^2)$ rate even with overparameterized models. %\red{Further,
%as shown by mean field theory, 
Unlike our method, 
these works do not justify the advantage of pruning from large models over direct gradient training. 

\textbf{Lottery  Ticket; Re-train After Pruning}
\citet{frankle2018lottery} proposed the Lottery Ticket Hypothesis, claiming the existence of  winning subnetworks inside large models. 
\citet{liu2018rethinking} regards pruning as a kind for neural architecture search.
A key difference between our work and 
\citet{frankle2018lottery}  and \citet{liu2018rethinking}
is how the parameters of the subnetwork are trained: 

i) We finetune the parameters of the subnetworks starting from the weights of the pre-trained large model, hence inheriting the information the large model. 

ii) \citet{liu2018rethinking} proposes to re-train the parameters of the pruned subnetwork starting from new random initialization. % which may be preferred when the small network is large enough to be trained efficiently from scratch, or the subnetwork is selected 

iii) \citet{frankle2018lottery} proposes to re-train the pruned subnetwork starting from the same initialization and random seed used to train the pre-trained model. 

Obviously, the different parameter training of subnetworks 
should be combined with different network pruning strategies to achieve the best results. Our algorithmic and theoretical framework naturally justifies the finetuning approach. 
%It is an open question if 
Different theoretical frameworks for justifying the proposals of \citet{liu2018rethinking}
and \citet{frankle2018lottery} (equipped with their corresponding subnetwork selection methods) are of great interest. 
%the re-trainingIn algorithm and theory, we select the subnetwork using a greedy scheme with strong theoretical justification while \citet{frankle2018lottery} finds the winning subnetwork based on heuristics. 

%Recently, 
%\paragraph{Pruning from Random Large Networks} \red{TODO} 
%Interestingly,  

More recently,  
a concurrent work 
\citet{malach2020proving} discussed a stronger form of lottery ticket hypothesis that shows    the existence of winning subnetworks in large networks with random weights (without  pre-training), %which draws connection with random features,
which corroborates the empirical observations in \citep{wang2019pruning, ramanujan2019s}.  
However, %their results of both papers cannot 
the result of \citet{malach2020proving} 
does not yield fast rate as our framework for justifying the advantage of network pruning over training from scratch, and does not motivate practical algorithms for finding good subnetworks in practice. 

%\textbf{Frank-Wolfe Algorithm}\space\space
\textbf{Frank-Wolfe Algorithm}
 As suggested in \citet{bach2017breaking},  
 Frank-Wolfe \citep{frank1956algorithm} 
 can be applied to learn neural networks, 
 which yields an algorithm that greedily adds neurons to  progressively construct a network. 
 However, %in \citet{welling2009herding, bach2017breaking}, 
 each step of Frank-Wolfe  %requires  to solve a
 leads to a challenging global optimization  problem, which can not be
 solved in practice. Compared with \citet{bach2017breaking}, 
our subnetwork selection approach can be 
viewed as constraining the 
%roughly %viewed as approximating the
global optimization %in Frank-Wolfe by discretizing 
the discretized search space constructed using over-parameterized large networks pre-trained using gradient descent. 
Because gradient descent on over-parameterized networks is shown to be nearly optimal  \citep[e.g.,][]{song2018mean,mei2019meandimfree,du2018gradient_1,du2018gradient,jacot2018neural}, selecting neurons inside the pre-trained models 
can provide a good approximation to the original non-convex problem. 

\textbf{Sub-modular Optimization}
An alternative general framework for analyzing greedy selection algorithms is based on sub-modular optimization
\citep{nemhauser1978analysis}. 
 %\red{[Cite]}. 
However, our problem ~\ref{equ:minF} is not sub-modular, 
and the (weak) sub-modular analysis  \citep[][]{das2011submodular}  
can only bound the ratio between $\DD[f_{S_n}]$ and the best loss of subnetworks of size $n$ achieved by \eqref{equ:minF}, not the best loss $\DD_N^*$ achieved by the best convex combination of all the $N$ neurons in the large model.  

\section{Conclusion}

We propose a simple and efficient greedy selection algorithm for constructing subnetworks from pretrained large networks. 
Our theory provably  justifies the advantage of pruning from large models over 
training small networks from scratch.
The importance of using sufficiently large, over-parameterized models and finetuning (instead of re-training) the selected subnetworks are emphasized. 
% if the original network is sufficiently large.
Empirically, 
our experiments verify our theory and 
show that our method improves the prior arts on pruning various models such as ResNet-34 and MobileNetV2 on Imagenet. 
\clearpage

\bibliographystyle{icml2020}
\bibliography{main}

\clearpage
\onecolumn

\section{Details for the Toy Example} \label{apx:toy}
Suppose we train the network with $n$ neurons for $T$ time using gradient descent with random initialization, i.e., the network we obtain is $f_{\rho_T^n}$ using the terminology in Section \ref{subsec:faster_mf}. As shown by \citet{song2018mean,mei2019meandimfree}, $\DD[f_{\rho_T^n}]$ is actually $\mathcal{O}(1/n+ \epsilon)$ with high probability, where $\epsilon = \DD[f_{\rho_T^\infty}]$ is the loss of the mean field limit network at training time $T$. \citet{song2018mean} shows that $\lim_{T\to\infty}\DD[f_{\rho_{T}^{\infty}}]=0$ under some regularity conditions and this implies that if the training time $T$ is sufficient, $\DD[f_{\rho_T^\infty}]$ is generally a smaller term compared with the $\mathcal{O}(1/n)$ term.

To generate the synthesis data, we first generate a neural network $f_{\text{gen}}(\x)=\frac{1}{1000}\sum_{i=1}^{N}b_i\text{sigmoid}(\boldsymbol{a}_{i}^{\top}\x)$, where $\boldsymbol{a}_{i}$ are i.i.d. sample from a 10 dimensional standard Gaussian distribution and $b_{i}$ are i.i.d. sample from a uniform distribution $\text{Unif}(-5,5)$. The training data $\x$ is also generated from a 10 dimensional standard Gaussian distribution. We choose $f_{\text{gen}}(\x)=y$ as the label of data. Our training data consists of 100 data points. The network we use to fit the data is $f=\frac{1}{n}\sum_{i=1}^{n}b_{i}'\text{tanh}(\boldsymbol{a}_{i}'^{\top}\x)$. We use network with 1000 neurons for pruning and the pruned models will not be finetuned. All networks are trained for same and sufficiently large time to converge.

\section{Finding Sub-Networks on CIFAR-10/100}

In this subsection, 
we display the results of applying our proposed algorithm to various model structures on CIFAR-10 and CIFAR-100.
On CIFAR-10 and CIFAR-100, we apply our algorithm to the networks already pruned by network slimming~\citep{liu2017slim} provided by ~\citet{liu2018rethinking}
and show that we can further compress models which have already pruned by the $L_1$ regularization.
We apply our algorithm on the pretrained models, and finetune the model with the same experimental setting as ImageNet.

As demonstrated in Table~\ref{tab:cifar}, 
our proposed algorithm can further compress a model pruned by ~\citet{liu2018rethinking} without or only with little drop on accuracy.
For example, on the pretrained VGG19 on CIFAR-10, ~\citet{liu2017slim} can prune 30\% channels and get $93.81\% \pm 0.14\%$ accuracy.
Our algorithm can prune 44\% channels of the original VGG19 and get $93.78\% \pm 0.16\%$ accuracy, which is almost the same as the strong baseline number reported by \citet{liu2018rethinking}.

%Based on the experimental results, 
%we can come to a hypothesis that,
%except $L_p$ regularizers, 
%there should be some different approaches to prune the neural networks.

\begin{table*}[htbp!]
    \centering
    \begin{tabular}{l|l|l|c|c}
        \hline
        DataSet & Model & Method & Prune Ratio (\%) & Accuracy (\%) \\
        \hline
        \multirow{6}{*}{CIFAR10} & \multirow{2}{*}{VGG19} & \citet{liu2017slim} & 70 & $93.81 \pm 0.14$ \\
        \cline{3-5} 
        & & Ours & \textbf{56} & $\bm{93.78 \pm 0.16}$ \\
        \cline{2-5} 
        & \multirow{4}{*}{PreResNet-164} & \citet{liu2017slim} & 60 & $94.90 \pm 0.04$ \\
        \cline{3-5} 
        & & Ours & \textbf{51} & $\bm{94.91 \pm 0.06}$ \\
        \cline{3-5} 
        & & \citet{liu2017slim} & 40 & $94.71 \pm 0.21$ \\
        \cline{3-5} 
        & & Ours & \textbf{33} & $\bm{94.68 \pm 0.17}$ \\
        \hline
        \multirow{6}{*}{CIFAR100} & \multirow{2}{*}{VGG19} & \citet{liu2017slim} & 50 & $73.08 \pm 0.22$ \\
        \cline{3-5} 
        & & Ours & \textbf{44} & $\bm{73.05 \pm 0.19}$ \\
        \cline{2-5} 
        & \multirow{4}{*}{PreResNet-164} & \citet{liu2017slim} & 60 & $76.68 \pm 0.35$ \\
        \cline{3-5} 
        & & Ours & \textbf{53} & $\bm{76.63 \pm 0.37}$ \\
        \cline{3-5} 
        & & \citet{liu2017slim} & 40 & $75.73 \pm 0.29$ \\
        \cline{3-5} 
        & & Ours & \textbf{37} & $\bm{75.74 \pm 0.32}$ \\
        \hline
    \end{tabular}
    
    \caption{Accuracy on CIFAR100 and CIFAR10. ``Prune ratio" stands for the total percentage of channels that are pruned in the whole network. 
    We apply our algorithm on the models pruned by ~\citet{liu2017slim} and find that our algorithm can further prune the models.
    The performance of \citet{liu2017slim} is reported by \citet{liu2018rethinking}.
    Our reported numbers are averaged by five runs.}
    \label{tab:cifar}
\end{table*}

\section{Discussion on Assumption \ref{asm:inter_N} and \ref{asm:inter}} \label{sec:inter_discuss}
Let $\phi_{j}(\th)=\sigma(\x^{(j)}, \th)/\sqrt{m}$ and 
%we define the data-dependent kernel by\begin{align*}K(\th,\th') & =\frac{1}{m}\sum_{j=1}^{m}\sigma(\th,\x^{(j)})\sigma(\th',\x^{(j)})=\bp^{\top}(\th)\bp(\th),\end{align*}
$\bp(\th)=\left[\phi_{1}(\th),...,\phi_{m}(\th)\right]$ to be the vector of the  outputs of the neuron $\sigma(\x; \th)$ 
scaled by $1/\sqrt{m}$, 
%(called )
realized on a dataset  $\dataset := \{\x^{(j)}\}_{j=1}^m$. 
%in  $\dataset$,
We call $\bp(\th)$ the feature map of $\th$. 
%:=\{\x^{(j)}, y\}_{j=1}^m$$.
%define the feature map 
 %isthe feature map and $\phi_{j}(\th)=\sigma(\th,\x^{(j)})/\sqrt{m}$. 
 %Let $\y=[y^{(1)},...,y^{(m)}]^\top/\sqrt{m}\in \R^m$. 
Given a large network $f_{[N]}(x) = \sum_{i=1}^N \sigma(\x; \th_i)/N$, define the marginal polytope of the feature map to be 
\[
\M_{N}:=\text{conv}\left\{ \bp(\th_i)\mid i \in \{1,\ldots, N\} \right\},
\]
where $\text{conv}$ denotes the convex hull. Then it is easy to see that Assumption~\ref{asm:inter_N} is equivalent to saying that 
$\y:=[y^{(1)},\ldots, y^{(m)}]/\sqrt{m}$ is in the interior of the marginal polytope $\M_N$, i.e., there exists $\gamma>0$ such that
$\mathcal{B}\left(\y,\gamma\right) \subseteq \M_N$. Here we denote by  $\mathcal{B}\left(\boldsymbol{\mu},r\right)$ the ball with
radius $r$ centered at $\boldsymbol{\mu}$. Similar to Assumption~\ref{asm:inter_N}, 
Assumption~\ref{asm:inter} is equivalent to require that 
$\mathcal{B}\left(\y,\gamma^{*}\right) \subseteq\M$, where  
$$
\M\coloneqq\text{conv}\left\{ \bp(\th)\mid\th\in\supp(\rho_{T}^{\infty})\right \}.$$ 
We may further relax the assumption to assuming $\y$ is in the relative interior (instead of interior) of $\M_N$ and $\M$. However, this requires some refined analysis and we leave this as future work.

It is worth mention that when $\M$ has dimension $m$ and $f_{\rho_T^\infty}$ gives zero training loss, then assumption \ref{asm:inter} holds. Similarly, if $\M_N$ has dimension $m$ and $f_{\rho_T^N}$ gives zero training loss, then assumption \ref{asm:inter_N} holds.

\section{Pruning Randomly Weighted Networks} \label{apx:random}
Our theoretical analysis is also applicable for pruning randomly weighted networks. Here we give the following corollary.

\begin{corollary}
Under Assumption \ref{asm:bound_smooth} and suppose that the  weights $\{\vv\theta_i\}$ of the large  neurons $f_{[N]}(\vv x)$  are i.i.d. drawn from an absolutely continuous  distribution $\rho_{0}$ with a bounded support in $\mathbb{R}^{d}$, without further gradient descent training. 
Suppose that Assumption \ref{asm:inter} and \ref{asm:density} hold for $\rho_{0}$ (changing $\rho_{T}^{\infty}$ to $\rho_{0}$). 
Let $S_{n}^{\text{Random}}$ be the subset obtained by the proposed greedy forward selection \eqref{equ:step} on such $f_{[N]}$ at the $n$-th step. For any $\delta>0$ and $\gamma<\gamma^*/2$, when $N$ is sufficiently large, with probability at least $1-\delta$, we have 
\[
\mathcal{L}[f_{S_{n}^{\text{Random}}}]=\mathcal{O}\left(1/\left(\min\left(1,\gamma\right)n\right)^{2}\right).
\]
\end{corollary}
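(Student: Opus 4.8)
The plan is to obtain this corollary as the special case $T = 0$ of Theorem~\ref{thm:main}. When the training time is $T = 0$, the gradient dynamics \eqref{equ:gd} are never run, so the parameters of $f_{[N]}$ are $\vv\theta_i = \vv\vartheta_i(0) \overset{\text{i.i.d.}}{\sim} \rho_0$ --- which is exactly the randomly weighted network considered in the corollary --- and the mean field limit satisfies $\rho_T^\infty = \rho_0^\infty = \rho_0$. Since the greedy forward selection \eqref{equ:step} is applied to this $f_{[N]}$ verbatim as in the theorem, $S_n^{\text{Random}} = S_n$, and it suffices to verify that the hypotheses used in the proof of Theorem~\ref{thm:main} are all implied by the assumptions made here.

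First I would check the assumptions with $T = 0$. Assumption~\ref{asm:bound_smooth} is assumed directly. Assumption~\ref{asm:init} holds because $\rho_0$ is absolutely continuous with bounded support by hypothesis and the requirement ``$\vv\theta_i = \vv\vartheta_i(T)$'' is vacuous at $T = 0$. Assumptions~\ref{asm:inter} and \ref{asm:density} are assumed for $\rho_0$ (with $\rho_T^\infty$ replaced by $\rho_0$), which is precisely what Theorem~\ref{thm:main} needs for the mean field limit. The only hypothesis of Theorem~\ref{thm:main} not explicitly listed in the corollary is Assumption~\ref{asm:smooth_2} (Lipschitz derivative of the activation); in the proof of Theorem~\ref{thm:main} this is used only to guarantee that the PDE \eqref{equ:rhoinfty} is well posed and that $\rho_T^N$ weakly converges to $\rho_T^\infty$. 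At $T = 0$ there is no PDE to evolve, and the convergence $\rho_0^N \Rightarrow \rho_0$ is just the law of large numbers for empirical measures, which needs only continuity of the feature map $\bp$ (guaranteed by Assumption~\ref{asm:bound_smooth} together with the bounded support of $\rho_0$). Hence Assumption~\ref{asm:smooth_2} can be dropped in this case.

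With the hypotheses in place, I would invoke Theorem~\ref{thm:main}: for any $\delta > 0$, when $N$ is sufficiently large, Assumption~\ref{asm:inter_N} holds for $f_{[N]}$ built from the i.i.d.\ $\rho_0$ weights, for every $\gamma \le \tfrac12\gamma^*$, with probability at least $1 - \delta$. On that event Theorem~\ref{lem:Nnet} applies and gives $\mathcal{L}[f_{S_n^{\text{Random}}}] = \mathcal{O}(1/(\min(1,\gamma)n)^2)$, which is the claim (restricted to strict $\gamma < \gamma^*/2$ if desired).

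The only genuinely nontrivial point --- and the one to verify carefully --- is the reduction step inside the proof of Theorem~\ref{thm:main} that passes from the mean field conditions (Assumptions~\ref{asm:inter} and \ref{asm:density} on $\rho_0$) to the finite-$N$ condition (Assumption~\ref{asm:inter_N}) for $f_{\rho_0^N}$. As explained in Appendix~\ref{sec:inter_discuss}, Assumption~\ref{asm:inter} says $\y$ lies in the interior of the marginal polytope $\M = \mathrm{conv}\{\bp(\th):\th\in\supp(\rho_0)\}$, while Assumption~\ref{asm:inter_N} says $\y$ lies in the interior of $\M_N = \mathrm{conv}\{\bp(\vv\theta_i):i\in[N]\}$; one must show $\mathcal{B}(\y,\gamma)\subseteq\M_N$ whenever $\mathcal{B}(\y,\gamma^*)\subseteq\M$ and $N$ is large. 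This follows by covering $\supp(\rho_0)$ with finitely many small balls, using Assumption~\ref{asm:density} together with a standard concentration (Chernoff/union) bound to ensure each ball contains at least one of the $N$ i.i.d.\ draws with probability $\ge 1 - \delta$, and then using Lipschitzness of $\bp$ on the bounded support to conclude that $\M_N$ is a Hausdorff-close inner approximation of $\M$, so that the $\gamma^*$-ball around $\y$ still contains a $\tfrac12\gamma^*$-ball lying inside $\M_N$. Since this argument never refers to $T$, it applies unchanged at $T = 0$, and I do not expect any additional obstacle.
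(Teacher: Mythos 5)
Your proposal is correct and matches the paper's own proof, which simply observes that the corollary is the special case $T=0$ of Theorem~\ref{thm:main} and that Assumption~\ref{asm:smooth_2} is then unnecessary because no gradient dynamics (and hence no PDE or propagation-of-chaos argument) are involved. Your additional verification of the remaining hypotheses and of the $\M_N$-approximates-$\M$ step is a faithful elaboration of the same reduction, not a different route.
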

This corollary is a special case of Theorem \ref{thm:main} when taking the training time to be zero ($T=0$). And as the network is not trained, Assumption \ref{asm:smooth_2} are not needed  for this corollary.

\section{Forward Selection is Better Than Backward Elimination} 
\label{apx: gbe}
A greedy backward elimination can be developed analogous to  
%A similar greedy method to
our greedy forward selection,  
in which we start with the full network and greedily eliminate neurons that gives the smallest increase in loss. Specifically, starting from $S_{0}^{\text{B}}=[N]$, we sequentially delete neurons via 
\begin{align} \label{equ:backgood}
S_{n+1}^{\text{B}}\leftarrow S_{n}^{\text{B}}\setminus \{i_{n}\}^{*},\ \ \ \text{where}\ i_{n}^{*}=\underset{i\in S_{n}^{\text{B}}}{\arg\min}\DD[f_{S_{n}^{\text{B}}\setminus \{i\}}],
\end{align}
where $\setminus$ denotes set minus. 
In this section, we demonstrate that the forward selection 
has significant advantages over backward elimination,
from both theoretical and empirical perspectives. 

%Here we give \qq{a} discussion on the difference between\qq{advantage? they are obviously different, right?} the greedy backward elimination and the greedy forward selection. 

\paragraph{Theoretical Comparison of Forward and Backward Methods}
Although greedy forward selection guarantees 
$\mathcal O(1/n)$ or $\mathcal O(1/n^2)$ error rate as we show in the paper, backward elimination does not enjoy similar theoretical guarantees.  
This is because the ``effective search space'' of backward elimination 
is more limited than that of forward selection, 
and gradually shrinkage over time. 
Specifically, 
at each iteration of backward elimination \eqref{equ:backgood}, 
the best neuron is chosen among $S_{n}^{\text{B}}$, which shrinks as more  neurons are pruned. In contrast, the new neurons in 
 greedy selection \eqref{equ:step} are always selected from the full set $[N]$, which permits each neuron to be selected at every iteration, for multiple times.
 We now elaborate the theoretical advantages of forward selection vs. backward elimination from 1) the best achievable loss by both methods and 2) the decrease of loss across iterations. 
 %This results in two consequence: 
 
 \emph{$\bullet$ On the lower bound.}  
 In greedy forward selection, one neuron can be selected for multiple times at different iterations, 
 while in backward elimination one neuron can only be deleted once.  
 As a result, the best possible loss achievable by backward elimination is worse than that of greedy elimination. 
 Specifically, because backward elimination yields a subnetwork in which each neuron appears at most once. We have an immediate lower bound of 
 $$\mathcal L [S_{n}^\text{B}] \geq \mathcal L_{N}^{\text{B}*},~~~\forall n\in [N],$$ 
 where 
 $$
 %L_{N}^{\text{B*}} = 
\DD_{N}^{\text{B}*}=\min_{\boldsymbol{\alpha}}\left\{ \mathcal{L}[f_{\boldsymbol{\alpha}}]:~~ \alpha_i = \bar \alpha_i /\sum_{i=1}^N {\bar \alpha_i}, ~~~ \bar \alpha_i \in \{0,1\} \right \}. 
%a_{i}\in\{1/n,0\},~~\left\Vert \boldsymbol{\alpha}\right\Vert _{0}=n,~~ n \in [N]\right\}. 
 $$
 In comparison, for $S_{n}^{*}$ from forward selection \eqref{equ:step}, we have from Theorem~\ref{thm:convex} that 
 $$\mathcal L[S_{n}^{*}]  = \mathcal O(1/n) + \mathcal L_N^*,
 $$
 where $\mathcal L_N^*$ equals (from Eq~\ref{equ:LN*}) 
 $$
\DD_{N}^{*}=\min_{\boldsymbol{\alpha}}\left\{ \mathcal{L}[f_{\boldsymbol{\alpha}}]:~~ a_{i} \geq 0, ~~~\sum_{i=1}^N \alpha_i = 1\right\}. 
 $$
 %Therefore, 
 This yields a simple comparison result of 
  $$
 \mathcal L[S_{n}^\text{B}] \geq \mathcal L[S_n^*] + (\DD_{N}^{\text{B}*} - \DD_{N}^{*}) +  \mathcal O(1/n). 
 $$
 Obviously, we have $\DD_{N}^{\text{B}*} \geq \DD_{N}^{*}$ because $\DD_{N}^{*}$ optimizes on a much larger set of $\vv\alpha$, 
 indicating that backward elimination is inferior to forward selection.  
 In fact, because $\DD_{N}^{\text{B}*}$ is most likely to be strictly larger than  $\DD_{N}^{*}$ in practice,  
 we can conclude that $\DD[S_n^{\text{B}}] = \Omega(1) + \mathcal L_N^*$ where $\Omega $ is the Big Omega notation. 
 This shows that 
 it is impossible to prove bounds similar to  
 $\mathcal L[S_{n}^{*}]  = \mathcal O(1/n) + \mathcal L_N^*$ in  Theorem~\ref{thm:convex} for backward elimination. 

 %which violates 
 %This yields

 %Because
 
 %Specifically, 
%This makes decrease of loss of each iteration no more guaranteed.\qq{which makes it hard to draw theoretical guarantees as $S_n^{\text{B}}$ shrinks?}
%The main difference\qq{say a key observation is that they have different "actual" search space, despite both trying to approximate the same optimization?} between greedy backward elimination and greedy forward selection is on the search space of subnetworks\qempty{sounds like an  artifact}.
%In greedy forward selection, one neuron can be selected multiply\qq{multiple} times\qq{,} while in backward elimination one neuron can only be deleted once. As a consequence, the search spaces of forward and backward method are different. There are two consequences, firstly,\qq{. Firstly} the optimal loss that these two methods at most can achieve \qq{the best possible loss can achieve ..} are different. Secondly, in each iteration of backward elimination, the best neuron is chosen among $S_{n}^{\text{B}}$, which shrinks as more neuron are being pruned. This makes decrease of loss of each iteration no more guaranteed.\qq{which makes it hard to draw theoretical guarantees as $S_n^{\text{B}}$ shrinks?}

\emph{$\bullet$ On the loss descend.}  
%
%Recall from {the proof of Theorem~\ref{thm:convex}}  that in greedy forward selection, we are guaranteed to have  (without assuming over-parameterization)  
The key ingredient for proving the $\mathcal{O}(n^{-1})$ convergence of greedy forward selection 
is a recursive inequality that bounds $\DD[f_{S_{n}}]$ at iteration $n$ using $\DD[f_{S_{n-1}}]$ from the previous iteration $n-1$. Specifically, we have 
\begin{align}\label{equ:decreasefw}
\DD[f_{S_{n}} ]\le \DD_N^* ~+~   \frac{ \DD_N^* -   \DD[f_{S_{n-1}}] }{n} ~+ ~ \frac{C}{n^{2}}, 
%\frac{n-1}{n}\DD[f_{S_{n-1}}]+\frac{1}{n}\DD_{N}^{*}+\frac{C}{n^{2}},
\end{align}
 %\qq{\text{what is the connection of this and the equation above?}}$$
where %$C=D_{\mathcal{M}_{N}}^{2}$ and 
$C=\max_{\boldsymbol{u},\boldsymbol{v}}\left\{ \left\Vert \boldsymbol{u}-\boldsymbol{v}\right\Vert^2 :\boldsymbol{u},\boldsymbol{v}\in\mathcal{M}_{N}\right\} $; see Appendix \ref{apx: pf1} for details. And inequality (\ref{equ:decreasefw}) directly implies that 
\[
\DD[f_{S_{n}}]\le\DD_{N}^{*} ~+~\frac{\DD[f_{S_{0}}]-\DD_{N}^{*}}{n},~~~~\forall n \in [N]. 
\]

An importance reason for  this inequality to hold is that the best 
neuron to add is selected from the whole set $[N]$ at each iteration.  
However,  similar result does not hold for 
backward elimination, because the neuron to eliminate is 
selected from $S_{n}^{\text{B}}$, whose  size shrinks when $n$ grows. 
In fact, 
for backward elimination, we guarantee to find counter examples that  
violate a counterpart of \eqref{equ:decreasefw}, as shown in the following result, and thus fail to give the $\mathcal{O}(n^{-1})$ convergence rate.
%The following examples shows that the greedy backward eliminationhas poor optimal loss and descending property \red{(it is a vague statement. could we say that it violate Theorem? like what we did in the rebuttal, there exists cases for which the conditions of theorem xxx holds but result does not hold).} \qq{why did not we use the appendix we wrote for rebuttal? I think it is better.}
%
\begin{theorem} 
For the $S_n^{\text{B}}$ constructed %by forward selection \eqref{equ:step} and 
by backward elimination in \eqref{equ:backgood}.  
%\red{For any $N$ and $m$, there} 
There 
exists a full network $f_{[N]}(\vx) = \sum_{i=1}^N\sigma(\vx; ~\vv\theta_i)/N$ 
and a dataset $\mathcal {D}_m=(\vx^{(i)}, y^{(i)})_{i=1}^m$ that satisfies Assumption \ref{asm:bound_smooth}, \ref{asm:inter_N}, such that $\mathcal L_N^{\text{B*}} > 0$ and $\exists n \in [N]$
$$
\DD[f_{S_{N-n}^{\text{B}}}]> \DD_{N}^{\text{B}*} ~+~ \frac{\DD[f_{S_{N}^{\text{B}}}]-\DD_{N}^{\text{B}*}}{n},
$$
In comparison,  
the $S_{n}$ from greedy forward selection satisfies 
\begin{align} \label{equ:gooddfdfd}
\DD[f_{S_{n}}]\le\DD_{N}^{*} ~+~\frac{\DD[f_{S_{0}}]-\DD_{N}^{*}}{n},~~~~\forall n \in [N]. 
\end{align}
In fact, 
on the same instance, we have $\DD_{N}^{*}=0$, and  
the faster rate $\DD[f_{S_{n}}]\le\DD_{N}^{*} = \mathcal{O}(n^{-2})$ %convergence rate 
also holds for greedy forward selection.
\end{theorem}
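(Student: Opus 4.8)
The plan is to prove the forward-selection bounds by invoking the results already established in the paper, and to construct an explicit small counterexample for the backward-elimination failure. For the forward-selection part, the instance we build will satisfy Assumptions~\ref{asm:bound_smooth} and \ref{asm:inter_N} with $\DD_N^*=0$, so \eqref{equ:gooddfdfd} is immediate from Proposition~\ref{thm:convex} (the $\mathcal O(1/n)$ bound specialized to $\DD_N^*=0$), and the faster rate $\DD[f_{S_n}]=\mathcal O(1/n^2)$ follows from Theorem~\ref{lem:Nnet} once Assumption~\ref{asm:inter_N} holds with some $\gamma>0$. So the only real work is to exhibit a single dataset $\mathcal D_m$ and network $f_{[N]}$ for which (i) Assumptions~\ref{asm:bound_smooth} and \ref{asm:inter_N} hold, (ii) $\DD_N^{\text{B}*}>0$, and (iii) greedy backward elimination \eqref{equ:backgood} overshoots the recursive bound, i.e. for some $n$, $\DD[f_{S_{N-n}^{\text{B}}}]>\DD_N^{\text{B}*}+\bigl(\DD[f_{S_N^{\text{B}}}]-\DD_N^{\text{B}*}\bigr)/n$.

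The construction I would use is the smallest nontrivial one: a single data point ($m=1$), so that the feature map $\bp(\th)$ is a scalar and the marginal polytope $\M_N$ is an interval on the real line; a network with a handful of neurons $N$ (say $N=4$ or $5$) whose feature values are chosen so that the convex hull of $\{\phi(\th_i)\}$ strictly contains $y/\sqrt m$ in its interior (giving Assumption~\ref{asm:inter_N} with an explicit $\gamma$), while no \emph{unweighted average} of any nonempty subset of the $\phi(\th_i)$ equals $y/\sqrt m$ (giving $\DD_N^{\text{B}*}>0$). Concretely one picks a cluster of neurons with feature value slightly above the target and one neuron well below it, so that the unique way the full network $f_{[N]}$ is close to $y$ is via the balance between the low outlier and the high cluster; removing the outlier first is forced by greediness, after which every remaining unweighted average is biased high and the loss jumps. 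I would then trace the first one or two elimination steps explicitly, compute $\DD[f_{S_{N-1}^{\text{B}}}]$, $\DD[f_{S_N^{\text{B}}}]$ and $\DD_N^{\text{B}*}$ as rational numbers, and verify the strict inequality for $n=1$ (or whatever index the outlier is removed at). Boundedness (Assumption~\ref{asm:bound_smooth}) is arranged by keeping $\x$, $y$, and the activation bounded, e.g. rescaling or using $\sigma_+=\tanh$.

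The main obstacle is making the greedy elimination path \emph{deterministic and analyzable}: I need to be sure that at the step where the informative outlier neuron is the minimizer of $\DD[f_{S_n^{\text{B}}\setminus\{i\}}]$, it is genuinely the arg-min, not a near-tie, and that subsequent eliminations do not accidentally repair the loss. With $m=1$ this reduces to comparing one-dimensional quantities $\bigl(\tfrac{1}{|S|-1}\sum_{i\in S\setminus j}\phi_i - y/\sqrt m\bigr)^2$ over $j\in S$, which is a finite explicit minimization, so the check is mechanical once the feature values are fixed; the design freedom in choosing those values gives enough slack to force strict inequalities everywhere. A secondary point is confirming $\DD_N^{\text{B}*}>0$: this is the claim that $y/\sqrt m$ is not the unweighted mean of any nonempty sub-multiset, which with generic (rationally independent) feature values is automatic, but I would state it as a direct arithmetic check on the chosen instance. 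Finally, I would note that since the same instance has $\DD_N^*=0$, the contrast invoked in the theorem statement — $\mathcal O(1/n^2)$ for forward selection versus a non-vanishing floor $\Omega(1)+\DD_N^{\text{B}*}$ for backward elimination — holds on one and the same example, which is exactly what the statement asserts.
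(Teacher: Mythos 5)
The overall architecture of your proposal matches the paper's: the forward-selection claims are discharged by Proposition~\ref{thm:convex} and Theorem~\ref{lem:Nnet} once the instance satisfies Assumptions~\ref{asm:bound_smooth} and \ref{asm:inter_N} with $\DD_N^*=0$, and the only substantive content is the explicit instance on which backward elimination fails. That is exactly where your proposal has a genuine gap: the mechanism you describe for the counterexample does not work as stated. You posit a cluster of neurons slightly above the target and one low outlier balancing them, and claim ``removing the outlier first is forced by greediness.'' But backward elimination \eqref{equ:backgood} removes the neuron whose \emph{removal minimizes} the resulting loss; if the full average is balanced near $y$, removing the outlier sends the average to the cluster value (a large loss), while removing a cluster neuron perturbs the average by $O(1/N)$, so greedy does the opposite of what you need and keeps the outlier. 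To force greedy to shed the outlier at step one, the outlier must be so extreme that retaining it is worse than the cluster bias --- but in the natural $m=1$ instantiations of this idea, every subset containing such an extreme outlier then also has loss comparable to the post-removal ``stuck'' loss, so $\DD_N^{\text{B}*}$ is itself of the same order and the recursive inequality is not violated. More generally, with $m=1$ the greedy elimination at each step moves the running average toward $y$ among $|S|$ scalar options, and it is quite effective; my attempts to instantiate your one-outlier design either let backward elimination reach a near-optimal subset or make $\DD_N^{\text{B}*}$ large.

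The paper's actual construction is materially different and more delicate: it takes $m=2$ and $N=43$ neurons whose feature maps include two neurons with mismatched second coordinate, one ``essential'' neuron at $[-0.5,1]$ that must be overweighted (selected four times by forward selection together with $[2,1]$ to hit $\y=[0,1]$ exactly, which is only possible because forward selection may repeat neurons), and a family of neurons oscillating around $[2,1]$; the violation of the recursion is then verified for all $n\in[38]$ with $\DD_N^{\text{B}*}>0.03$. Your proposal correctly identifies what needs to be checked (determinism of the greedy path, $\DD_N^{\text{B}*}>0$, the strict inequality at some $n$), but it never fixes concrete feature values, and the one qualitative design principle it offers is inverted. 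As written, the proof would fail at the step you yourself flag as ``the only real work.''
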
 
%\paragraph{An Example}
\begin{proof} 
Suppose the data set contains 2 data points and we represent the neurons as the feature map as in section \ref{sec:inter_discuss}.
Suppose that $N=43$, $\bp(\th_{1})=[0,1.5]$, $\bp(\th_{2})=[0,0]$,
$\bp(\th_{3})=[-0.5,1]$, $\bp(\th_{4})=[2,1]$ and $\bp(\th_{i})=[(-1.001)^{i-3}+2,1]$,
$i\in\{5,6,....,43\}$ and the target $\textbf{y}=[0,1]$ (it is easy to construct the actual weights of neurons and data points such that the above feature maps hold). Deploying
greedy backward elimination on this case gives that 
\[
\DD[f_{S_{N-n}^{\text{B}}}]>\frac{\DD[f_{S_{N}^{\text{B}}}]-\DD_{N}^{\text{B}*}}{n}+\DD_{N}^{\text{B}*},
\]
for $n\in[38]$, where  $\DD_{N}^{\text{B}*}=\min_{n\in[N]}\DD_{N,n}^{\text{B}*}>0.03$.
In comparison, for greedy forward selection, 
(\ref{equ:gooddfdfd}) holds from the proof of Theorem~\ref{thm:convex}. 
%we have \[
%\DD[f_{S_{n}^{\text{F}}}]\le\frac{\DD[f_{S_{0}^{\text{F}}}]-\DD_{N}^{\text{F}*}}{n}+\DD_{N}^{\text{F}*},
%\]
In addition, on the same instance, we can verify that $\DD_{N}^{*}=0$, and 
the faster  $\mathcal{O}(n^{-2})$ convergence rate also holds for greedy forward selection. In deed,
the greedy forward selection is able to achieve 0 loss using two neurons
(by selecting $\bp(\th_{3})$ for four times and $\bp(\th_{4})$ once). 
%Figure{[}xxx{]} shows the loss w.r.t. unpruned neuron for both methods.
\end{proof} 

\iffalse 
\begin{figure}
\begin{centering}
\includegraphics[scale=0.5]{figure/for_vs_back}
\par\end{centering}
\caption{Forward Selection v.s. Backward Elimination. \qq{I am not sure what it means. why does the loss increase in backward? the forward part is too small.}}
\end{figure}
\fi

\paragraph{Empirical Comparison of Forward and Backward Methods}
We compare 
forward selection and  backward elimination to prune Resnet34 and MobilenetV2 on Imagenet. As shown in Table \ref{tbl:gbe_gfsddfd}, 
forward selection tends to achieve better top-1 accuracy in all the cases, 
which is consistent with the theoretical analysis above.  
%We apply the greedy backward elimination method to prune Resnet34 and MobilenetV2 on Imagenet. 
The experimental settings of the greedy backward elimination is the same as that of the greedy forward selection. 
%Table \ref{tbl:gbe_gfs} summarizes the results. 

\begin{table}
\begin{centering}
\begin{tabular}{c|c|c|c}
\hline 
Model & Method & Top1 Acc & FLOPs\tabularnewline
\hline 
\multirow{4}{*}{ResNet34} & Backward & 73.1 & 2.81G\tabularnewline
 & Forward & 73.5 & 2.64G\tabularnewline
\cline{2-4} \cline{3-4} \cline{4-4} 
 & Backward & 72.4 & 2.22G\tabularnewline
 & Forward & 72.9 & 2.07G\tabularnewline
\hline 
\multirow{4}{*}{MobileNetV2} & Backward & 71.4 & 257M\tabularnewline
 & Forward & 71.9 & 258M\tabularnewline
\cline{2-4} \cline{3-4} \cline{4-4} 
 & Backward & 70.8 & 215M\tabularnewline
 & Forward & 71.2 & 201M\tabularnewline
\hline 
\end{tabular}\caption{Comparing greedy forward selection and backward elimination on Imagenet.}
\label{tbl:gbe_gfsddfd}
\par\end{centering}
\end{table}

\section{Proofs}
Our proofs use the definition of the convex hulls defined in Section \ref{sec:inter_discuss} of Appendix.

\subsection{Proof of Proposition \ref{thm:convex}} \label{apx: pf1}
The proof of Proposition \ref{thm:convex} follows the standard argument of proving the convergence rate of Frank-Wolfe algorithm with some additional arguments. Our algorithm is not a Frank-Wolfe algorithm, but as illustrated in the subsequent proof, 
we can essentially use the Frank-Wolfe updates to control the error of our algorithm. 
%it shares great similarity with Frank-Wolfe 

%For the completeness, we include the proof here.
Define $\ell(\u)=\left\Vert \u-\y\right\Vert ^{2}$, 
then the subnetwork selection problem can be viewed as solving 
$$
\min_{\u \in \M_N} \ell(\u), 
$$
with $\mathcal L_N^* = \min_{\u \in \M_N} \ell(\u)$. 
And our algorithm 
can be viewed as starting from $\u^0 =0$ and iteratively updating  $\u$ by 
\begin{align}\label{equ:ukupdates}
\u^k = (1-\xi_k)\u^{k-1} + \xi_k  \boldsymbol{q}^k, ~~~~~~~~
\boldsymbol{q}^k = \underset{\boldsymbol{q}\in \mathrm{Vert}(\M_N) }{\arg\min}\ell\left((1-\xi_k)\u^{k-1}+\xi_k\boldsymbol{q}\right),  
%\frac{1}{1}. 
\end{align}
where $\mathrm{Vert}(\M_N):=\left\{ \bp(\th_{1}),...,\bp(\th_{N})\right\}$ denotes the vertices of $\M_N$, and we shall take  $\xi_k = 1/k$. 
We aim to prove that $\ell(\u^k) = O(1/k) +\DD_N^*$. 
Our proof can be easily extended to general convex functions $\ell(\cdot)$ and different $\xi_k$ schemes.  

%%Notice that 
By the convexity and the quadratic form of $\ell(\cdot)$,
for any $\boldsymbol{s}$, we have 
\begin{align}
& \ell(\boldsymbol{s})\ge\ell(\u^{k-1})+\nabla\ell(\u^{k-1})^{\top}(\boldsymbol{s}-\u^{k-1})\label{equ:convex1} \\    
& \ell(\boldsymbol{s})\le\ell(\u^{k-1})+\nabla\ell(\u^{k-1})^{\top}(\boldsymbol{s}-\u^{k-1}) +  \norm{\boldsymbol{s}-\u^{k-1}}^2.  \label{equ:convex2}
\end{align}
%where $C := \sup_{\u} (\lambda_{\max}(\nabla^2  \ell(\u))/2) = 1$. 

Minimizing $\boldsymbol{s}$ in $\M_{N}$ on both sides of \eqref{equ:convex1}, we have 
\begin{align} \label{equ:LNineq}
\begin{split}
\mathcal L_N^* = \min_{\boldsymbol{s} \in \M_N} \ell(\boldsymbol{s}) 
& \geq %\ell(\u^*)\ge
\min_{\boldsymbol{s} \in \mathcal \M_N}\left \{ \ell(\u^{k-1})+\nabla\ell(\u^{k-1})^{\top}(\boldsymbol{s} -\u^{k-1}) \right \} \\
& = %\ell(\u^*)\ge
 \ell(\u^{k-1})+
 %\min_{\vv s \in \mathcal \M_N}\left \{
 \nabla\ell(\u^{k-1})^{\top}(\boldsymbol{s}^{k}-\u^{k-1}).
 %\right \} \\
 \end{split}
\end{align}
Here we define 
\begin{align}\label{equ:fwupdate} 
\begin{split}
\boldsymbol{s}^k & = %s\argmin
\underset{\boldsymbol{s}\in \M_N }{\arg\min}\nabla\ell(\u^{k-1})^{\top}(\boldsymbol{s} -\u^{k-1}) \\
& = \underset{\boldsymbol{s}\in\mathrm{Vert}(\M_N) 
%\left\{ \bp(\th_{1}),...,\bp(\th_{N})\right\}
}{\arg\min}\nabla\ell(\u^{k-1})^{\top}(\boldsymbol{s} -\u^{k-1}), \end{split}
\end{align}
%\ell\left((1-\gamma_{k})\u^{k-1}+\gamma_{k}\boldsymbol{q}\right).
where the second equation holds because we optimize a linear objective on a convex polytope $\M_N$ and hence the solution must be achieved on the vertices $\mathrm{Vert}(\M_N)$.   
Note that if we update $\u^k$ by $\u^k =(1-\xi_k)\u^{k-1} + \xi_k \boldsymbol{s}^k$, we would get the standard Frank-Wolfe (or conditional gradient) algorithm. The difference between our method and Frank-Wolfe is that we greedily minimize the loss $\ell(\u^k),$ while the Frank-Wolfe minimizes the linear approximation in \eqref{equ:fwupdate}.

%Meanwhile, 
Define $D_{\M_N}: = \max_{\u,\boldsymbol{v}}\{\norm{\u-\boldsymbol{v}} \colon ~ \u, \boldsymbol{v} \in \M_N\}$ to be the diameter of $\M_N$. 
Following \eqref{equ:ukupdates}, we have %note that 
\begin{align}
\ell(\u^{k}) & =\underset{\boldsymbol{q}\in\mathrm{Vert}(\M_N) 
%\left\{ \bp(\th_{1}),...,\bp(\th_{N})\right\}
}{\min}\ell\left((1-\xi_k)\u^{k-1}+\xi_k\boldsymbol{q}\right)\notag \\
 & \le\ell\left((1-\xi_k)\u^{k-1}+\xi_k\boldsymbol{s}^{k}\right)\notag \\
 & \le\ell\left(\u^{k-1}\right)+\xi_k \nabla\ell(\u^{k-1})^\top\left(\boldsymbol{s}^{k}-\u^{k-1}\right) + C\xi_k^{2}  \label{equ:ww}\\
 & \le(1-\xi_k)\ell\left(\u^{k-1}\right)+\xi_k \DD_N^* +C\xi_k^{2},\label{equ:vffdfd}  
\end{align}
where we define $C :={D_{\M_N}^2}$, \eqref{equ:ww} follows \eqref{equ:convex2}, and \eqref{equ:vffdfd} follows \eqref{equ:LNineq}. 
%And thus 
Rearranging this, we get 
\[
\ell(\u^{k})-\DD_N^*-C\xi_k\le(1-\xi_k)\left(\ell(\u^{k-1})-\DD_N^*-C\xi_k\right)
\]
By iteratively applying the above inequality, we have
\begin{align*}
%\ell(\u^{k})-\ell(\u^*)-C/k=
\ell(\u^{k})-\DD_N^*-C\xi_k & \le
\left( \prod_{i=1}^{k}(1-\xi_{i})\right ) \left(\ell(\u^{0})-\DD_N^*-C\xi_{1}\right).\\
 %& =\frac{1}{k}\left(\ell(\u^{1})-\ell(\u^*)-C\right).
\end{align*}
Taking $\xi_k = 1/k$. We get 
\begin{align*}
%\ell(\u^{k})-\ell(\u^*)-C/k=
\ell(\u^{k})-\DD_N^*-\frac{C}{k} & \le
%\left( \prod_{i=2}^{k}(1-\gamma_{i})\right )
\frac{1}{k}\left(\ell(\u^{0})-\DD_N^*-C \right).\\
 %& =\frac{1}{k}\left(\ell(\u^{1})-\ell(\u^*)-C\right).
\end{align*}
And thus 
$$
\ell(\u^{k})\le   \frac{1}{k}\left(\ell(\u^{0})-\DD_N^* \right) + \DD_N^* = \mathcal O\left (\frac{1}{k} \right ) + \mathcal L_N^*.  
$$
This completes the proof.

\subsection{Proof of Theorem \ref{lem:Nnet}}

The proof leverages the idea from  the proof of Proposition~\ref{thm:convex} of   \citet{chen2012super} for analyzing their \emph{Herding} algorithm, but contains some extra nontrivial argument.

Following the proof of Proposition~\ref{thm:convex}, our problem can be viewed as  
%For the completeness, we include the proof here.
%Define $\ell(\u)=\left\Vert \u-\y\right\Vert ^{2}$, 
%then the subnetwork selection problem can be viewed as solving 
$$
\min_{\u \in \M_N} \left \{\ell(\u) := \left\Vert \u-\y\right\Vert ^{2} \right\}, 
$$
with $\mathcal L_N^* = \min_{\u \in \M_N} \ell(\u)$,  our greedy algorithm can be viewed as starting from $\u^0 =0$ and iteratively updating  $\u$ by 
\begin{align}\label{equ:ukupdates}
\u^k = \frac{k-1}{k}\u^{k-1} + \frac{1}{k}  \boldsymbol{q}^k, ~~~~~~~~
\boldsymbol{q}^k = \underset{\boldsymbol{q}\in \mathrm{Vert}(\M_N) }{\arg\min}%\ell\left(
\norm{\frac{k-1}{k}\u^{k-1}+\frac{1}{k}\boldsymbol{q} - \y }^2
%\right),  
%\frac{1}{1}. 
\end{align}
where $\mathrm{Vert}(\M_N):=\left\{ \bp(\th_{1}),...,\bp(\th_{N})\right\}$ denotes the vertices of $\M_N$. %and we shall take  $\gamma_k = 1/k$. 
We aim to prove that $\ell(\u^k) = \mathcal \red{O(1/(k\max(1,\gamma))^2)}$,  under Assumption~\ref{asm:inter_N}.  % +\DD_N^*$. 

%Our proof can be easily extended to general convex functions $\ell(\cdot)$ and different $\gamma_k$ schemes.  
Define $\w^k = k (\y - \u^{k})$, then $\ell(\u^k) = \norm{\w^k}^2/k^2$. Therefore, it is sufficient to prove that $\norm{\w^k} = \mathcal O(1/(\max(1,\gamma)))$. 

Similar to the proof of Proposition~\ref{thm:convex}, we define
\begin{align*} 
 \boldsymbol{s}^{k+1} & 
  = %s\argmin
\underset{\boldsymbol{s}\in \M_N }{\arg\min}\nabla\ell(\u^{k})^{\top}(\boldsymbol{s} -\u^{k}) \\ 
& = \underset{\boldsymbol{s}\in \M_N }{\arg\min}\nabla\ell(\u^{k})^{\top}\boldsymbol{s} \\ 
& = \underset{\boldsymbol{s}\in \M_N }{\arg\min} \langle \w^k, ~~ \boldsymbol{s} \rangle. \\
& = \underset{\boldsymbol{s}\in \M_N }{\arg\min} \langle \w^k, ~~ (\boldsymbol{s} - \y)\rangle. 
 %\ell\left(
%\norm{\frac{k-1}{k}\u^{k-1}+\frac{1}{k}\boldsymbol{q} - \y }^2 \\
%& = \underset{\boldsymbol{q}\in \mathrm{Vert}(\M_N) }{\arg\min} %\ell\left(
%\norm{\frac{k-1}{k}\u^{k-1}+\frac{1}{k}\boldsymbol{q} - \y }^2, 
\end{align*}
Because $\mathcal B(\y, \gamma)$ is included in $\M_N$ by Assumption~\ref{asm:inter_N}, we have $\boldsymbol{s}' := \y-\gamma\w^k/\norm{\w^k} \in \M_{N}$. Therefore 
$$
\langle \w^k, ~~ (\boldsymbol{s}^{k+1} - \y)\rangle  
= \min_{\boldsymbol{s}\in \M_N}\langle \w^k, ~~ (\boldsymbol{s} - \y)\rangle  
\leq \langle \w^k, ~~ (\boldsymbol{s}' - \y)\rangle   
= - \gamma \norm{\w^k}. 
$$

Note that
\begin{align*}
    \norm{\w^{k+1}}^2  & = \min_{\boldsymbol{q} \in \mathrm{Vert}(\M_N)}\norm{k \u^{k}  + \boldsymbol{q} - (k+1)\y}^2 \\
 & = \min_{\boldsymbol{q} \in \mathrm{Vert}(\M_N)}\norm{\w^{k}  + \boldsymbol{q} - \y}^2 \\    
    & \leq  \norm{\w^k +  \boldsymbol{s}^{k+1} - \y}^2 \\ 
    & = \norm{\w^k}^2 + 2 \langle \w^k, ~ (\boldsymbol{s}^{k+1} - \y ) \rangle + \norm{\boldsymbol{s}^{k+1} - \y}^2 \\
    & \leq \norm{\w^k}^2  - 2\gamma \norm{\w^k} + D_{\M_N}^2,
\end{align*}
where $D_{\M_N}$ is the diameter of $\M_N$.  
Because $\w^0 = 0$, 
using Lemma~\ref{lem:zzz}, 
we have
$$
\norm{\w^k}\leq \max({D_{\M_N}}, ~ D_{\M_N}^2/2,  D_{\M_N}^2/(2\gamma)) = \mathcal O\left ({\frac{1}{\min(1, \gamma)}}\right ),~~~~\forall k=1,2,\ldots, 
$$
This proves that $\ell(\u^k) = \frac{\norm{\w^k}^2}{k^2} = \mathcal O\left (\frac{1}{k^2\min(1,\gamma)^2} \right)$. 
%for some constant $C$. 
%where we use the fact that $\langle \nabla \ell(\u^k), ~ \vv s \rangle  = 2\langle (\u^k - \y), \vv s \rangle $

\begin{lemma}\label{lem:zzz} 
Assume $\{z_k\}_{k\geq 0}$ is a sequence of numbers 
satisfying $z_0=0$ and 
%the following  recursive inequality: 
$$
|z_{k+1}|^2 \leq |z_k|^2 - 2\gamma |z_k| + C,~~~~ \forall k = 0, 1,2, \ldots
$$
where $C$ and $\gamma$ are two positive numbers. Then we have $|z_k|\leq \max(\sqrt{C}, ~ C/2,  C/(2\gamma))$ for all $ k = 0, 1,2, \ldots$.  
%Assume $|z_0| \leq C$
\end{lemma}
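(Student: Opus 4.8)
The plan is to prove the stated bound by a simple induction on $k$, exploiting the fact that the right-hand side of the recursion, viewed as a function of $|z_k|$, is a convex quadratic, so over the interval fixed by the induction hypothesis it is maximized at an endpoint. Set $B := \max(\sqrt C,\, C/2,\, C/(2\gamma))$; note $B \ge 0$ since $C,\gamma > 0$. The base case holds trivially because $|z_0| = 0 \le B$.

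For the inductive step, I assume $|z_k| \le B$ and consider $\psi(t) := t^2 - 2\gamma t + C$. Since $\psi$ is convex, $\max_{t \in [0,B]} \psi(t) = \max\!\big(\psi(0),\, \psi(B)\big) = \max\!\big(C,\ B^2 - 2\gamma B + C\big)$. I then check that both candidates are $\le B^2$: first $C \le B^2$ because $B \ge \sqrt C$; second $B^2 - 2\gamma B + C \le B^2$ because $B \ge C/(2\gamma)$ forces $2\gamma B \ge C$. Hence $\psi(t) \le B^2$ for all $t \in [0,B]$, and substituting $t = |z_k|$ into the hypothesis $|z_{k+1}|^2 \le \psi(|z_k|)$ gives $|z_{k+1}|^2 \le B^2$, i.e. $|z_{k+1}| \le B$. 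This closes the induction, proving $|z_k| \le B$ for every $k$.

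I do not expect a genuine obstacle here: the argument is elementary, and the only point worth isolating is the structural observation that convexity of $\psi$ in $|z_k|$ means the worst case within the inductive window $[0,B]$ sits at a boundary, with the two terms $\sqrt C$ and $C/(2\gamma)$ in the definition of $B$ tailored exactly to dominate $\psi(0)=C$ and $\psi(B)$ respectively (the third term $C/2$ is not needed but does no harm). An equivalent route avoiding the quadratic bookkeeping is a two-case split inside the inductive step: if $|z_k| \ge C/(2\gamma)$ then $-2\gamma|z_k| + C \le 0$, so $|z_{k+1}| \le |z_k| \le B$; and if $|z_k| < C/(2\gamma)$ then $\psi(|z_k|) \le \max\!\big(C,\,(C/(2\gamma))^2\big) = \big(\max(\sqrt C,\, C/(2\gamma))\big)^2 \le B^2$. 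Either way, combined with the identity $\ell(\u^k) = \norm{\w^k}^2/k^2$ used in the proof of Theorem~\ref{lem:Nnet}, the lemma supplies the claimed $\mathcal O\!\big(1/(k\min(1,\gamma))^2\big)$ convergence rate.
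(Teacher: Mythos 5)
Your proof is correct and follows essentially the same route as the paper's: induction on $k$, with the inductive step handled by observing that the convex quadratic $t^2 - 2\gamma t + C$ attains its maximum over the inductive window at an endpoint, so that the terms $\sqrt{C}$ and $C/(2\gamma)$ in the bound dominate the two endpoint values. The paper phrases this as a two-case split on whether $|z_k|$ exceeds $C/(2\gamma)$ — exactly the "equivalent route" you mention at the end — and, like you, never actually needs the $C/2$ term in the maximum.
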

\begin{proof}
We prove $|z_k|\leq \max(\sqrt{C}, ~ C/2,  C/(2\gamma)):= u_*$ by induction on $k$. 
Because $z_0=0$, the result holds for $k=0$.  
Assume   $|z_k|\leq u_*$, we want to prove that  $|z_{k+1}|\leq u_* %\max(\sqrt{2C}, ~ C,  C/\gamma)
$ also holds. 

Define $f(z) = z^2 - 2\gamma z + C$. 
Note that the maximum of $f(z)$ on an interval is always achieved on the vertices, because $f(z)$ is convex. 

\paragraph{Case 1:} If $|z_k| \leq C/(2\gamma)$, then we have
\begin{align*}
|z_{k+1}|^2 
& \leq  f(|z_k|) % \\
%|z_k|^2 - 2\gamma |z_k| + 2C \\
%\max(\sqrt{2C}, ~ C,  C/\gamma) \\
 \leq \max_{z} 
\bigg \{ f(z) \colon ~~ 
z \in [0,~~ C/(2\gamma)] \bigg\} 
 = \max\bigg \{ f(0), ~~  f(C/(2\gamma)) \bigg\} 
 =  \max\bigg \{C, ~~  C^2/(4\gamma^2) \bigg\} \leq u_*^2.  
\end{align*}

\paragraph{Case 2:} If $|z_k| \geq C/(2\gamma)$, then we have 
$$
|z_{k+1}|^2\leq |z_k|^2 - 2\gamma |z_k| + C  \leq |z_k|^2 \leq u_*^2.   
$$
In both cases, we have $|z_{k+1}|\leq u_*$. This completes the proof. 
\end{proof}

\subsection{Proof of Theorem \ref{thm:main}}
We first introduce the following Lemmas.

%\begin{lemma} \label{lem:inter_iid}
%Under the Assumption \ref{asm:bound_smooth},~\ref{asm:init},~\ref{asm:smooth_2},~\ref{asm:inter} and \ref{asm:density}. Suppose that we have $\bar{\th}_{i}\stackrel{\text{i.i.d.}}{\sim}\rho_{T}^{\infty}$
%and define $\hat{\rho}^{N}(\th)=\frac{1}{N}\sum_{i=1}^{N}\delta_{\bar{\th}_{i}}(\th)$. For any $\delta>0$, when $N$ is sufficient large, with probability
%at least $1-\delta$, we have
%\[
%\mathcal{B}\left(\y,\frac{7}{8}\gamma^{*}\right)\subseteq\text{conv}\left\{ \bp(\th)\mid\th\in\supp(\hat{\rho}^{N})\right\}.
%\]
%\end{lemma}

\begin{lemma} \label{lem:inter_mf}
Under the Assumption \ref{asm:bound_smooth},~\ref{asm:init},~\ref{asm:smooth_2},~\ref{asm:inter} and \ref{asm:density}. For any $\delta>0$, when $N$ is sufficient large, with probability
at least $1-\delta$,
\[
\mathcal{B}\left(\y,\frac{1}{2}\gamma^{*}\right)\subseteq\text{conv}\left\{ \bp(\th)\mid\th\in\supp(\rho_{T}^{N})\right\}.
\]
Here $\rho_{T}^{N}$ is the distribution of the weight of
the large network with $N$ neurons trained by gradient descent.
\end{lemma}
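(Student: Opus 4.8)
The plan is to transfer the interior-ball property from the mean-field limit $\rho_T^\infty$ (Assumptions~\ref{asm:inter} and~\ref{asm:density}) to the finite-width empirical measure $\rho_T^N$ in three stages. \textbf{Stage~1}: extract a \emph{finite} subset $\Theta^\ast\subseteq\supp(\rho_T^\infty)$ whose feature maps already convex-hull-enclose a ball around $\y$ of radius close to $\gamma^\ast$. \textbf{Stage~2}: show that, with probability tending to $1$ as $N\to\infty$, each point of $\Theta^\ast$ is approximated to within a small $r$ by some trained neuron $\vv\vartheta_i(T)$, i.e.\ by a point of $\supp(\rho_T^N)$. \textbf{Stage~3}: a support-function argument showing the enclosed ball shrinks only by $\mathcal O(r)$ under such vertex perturbations, so taking $r$ small yields $\mathcal B(\y,\tfrac12\gamma^\ast)\subseteq\text{conv}\{\bp(\th):\th\in\supp(\rho_T^N)\}$.

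For Stage~1, under Assumptions~\ref{asm:bound_smooth} and~\ref{asm:smooth_2} the mean-field velocity field is Lipschitz, so the limiting flow map $\bar\Phi_T$ (pushing $\rho_0$ to $\rho_T^\infty$) is a homeomorphism; since $\rho_0$ has bounded support (Assumption~\ref{asm:init}), $\supp(\rho_T^\infty)=\bar\Phi_T(\supp(\rho_0))$ is compact, hence so is $V:=\{\bp(\th):\th\in\supp(\rho_T^\infty)\}$ (the feature map is locally Lipschitz by Assumption~\ref{asm:bound_smooth}) and therefore $\M=\text{conv}(V)$ is compact. By Assumption~\ref{asm:inter}, equivalently $\mathcal B(\y,\gamma^\ast)\subseteq\M$ (see Appendix~\ref{sec:inter_discuss}), fix a finite $\tfrac12$-net $\{u_l\}_{l=1}^L$ of the unit sphere of $\R^m$; since for any unit $w$ there is $l$ with $\langle w,u_l\rangle=1-\tfrac12\|w-u_l\|^2\ge\tfrac78$, the support function of $\text{conv}\{u_l\}$ is everywhere $\ge\tfrac78$, so $\text{conv}\{\y+\gamma^\ast u_l\}_l\supseteq\mathcal B(\y,\tfrac78\gamma^\ast)$. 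Each $\y+\gamma^\ast u_l\in\M=\text{conv}(V)$, so by Carath\'{e}odory it is a convex combination of at most $m+1$ points $\bp(\bar\th_{l,j})$ with $\bar\th_{l,j}\in\supp(\rho_T^\infty)$; set $\Theta^\ast:=\{\bar\th_{l,j}\}$, a finite set of size $K\le(m+1)L$, so that $\text{conv}\{\bp(\th):\th\in\Theta^\ast\}\supseteq\mathcal B(\y,\tfrac78\gamma^\ast)$.

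For Stages~2 and~3, Gr\"onwall's inequality with the Lipschitz bounds above keeps all particles $\vv\vartheta_i(t)$, $t\le T$, in a fixed compact ball $\mathcal K$, on which $\bp$ is $L_\phi$-Lipschitz; fix $r\in(0,\gamma^\ast]$ small enough that $2L_\phi r\le\tfrac38\gamma^\ast$, and let $p_0=p_0(r)>0$ be as in Assumption~\ref{asm:density}. For each $\bar\th\in\Theta^\ast$ the open ball $U_{\bar\th}:=\{\vv\vartheta:\|\vv\vartheta-\bar\th\|<2r\}$ satisfies $\rho_T^\infty(U_{\bar\th})\ge p_0$; by the mean-field convergence of \citet{song2018mean, mei2019meandimfree}, $\rho_T^N\to\rho_T^\infty$ weakly in probability as $N\to\infty$, so the portmanteau theorem gives $\mathbb P(\rho_T^N(U_{\bar\th})>0)\to1$, i.e.\ with high probability $\supp(\rho_T^N)$ has a point within $2r$ of $\bar\th$. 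A union bound over the $K$ elements of $\Theta^\ast$ shows that, for $N$ large, with probability at least $1-\delta$ there is a map $\bar\th\mapsto\th_{\bar\th}\in\supp(\rho_T^N)$ with $\|\th_{\bar\th}-\bar\th\|\le 2r$, hence $\|\bp(\th_{\bar\th})-\bp(\bar\th)\|\le2L_\phi r$. Finally, if $\text{conv}\{v_k\}\supseteq\mathcal B(\y,\tau)$ and $\|v_k'-v_k\|\le\eta$ for all $k$, then $h_{\text{conv}\{v_k'\}}(w)\ge h_{\text{conv}\{v_k\}}(w)-\eta\ge h_{\mathcal B(\y,\tau-\eta)}(w)$, so $\text{conv}\{v_k'\}\supseteq\mathcal B(\y,\tau-\eta)$; applying this with $\tau=\tfrac78\gamma^\ast$ and $\eta=2L_\phi r\le\tfrac38\gamma^\ast$ gives $\text{conv}\{\bp(\th):\th\in\supp(\rho_T^N)\}\supseteq\text{conv}\{\bp(\th_{\bar\th}):\bar\th\in\Theta^\ast\}\supseteq\mathcal B(\y,\tfrac12\gamma^\ast)$ on this event, which is the claim.

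I expect the main obstacle to be Stage~2: making the transfer from $\rho_T^\infty$ to the random finite-$N$ network rigorous requires invoking the mean-field convergence of \citet{song2018mean, mei2019meandimfree} in a form strong enough to cover the trained network at finite time $T$, and it is Assumption~\ref{asm:density} that makes this usable --- weak convergence alone would not guarantee that each of the finitely many required support points of $\rho_T^\infty$ is approximated by an actual neuron of $\rho_T^N$, since $\rho_T^\infty$ could put vanishing mass near some of them. The geometric ingredients (the sphere-net / support-function computation, Carath\'{e}odory, and monotonicity of the enclosed ball under vertex perturbation) are elementary but rest on compactness of $\supp(\rho_T^\infty)$ and local Lipschitzness of $\bp$, which in turn use Assumptions~\ref{asm:bound_smooth},~\ref{asm:init}, and~\ref{asm:smooth_2}.
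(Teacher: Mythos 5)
Your proof is correct, but it takes a genuinely different route from the paper's. The paper proves a \emph{uniform} approximation result (Lemma~\ref{techlem:neighbor3}): with high probability \emph{every} $\th\in\supp(\rho_T^\infty)$ has its feature map within $\gamma^*/6$ of the feature map of some actual neuron of $\rho_T^N$. This is obtained via a VC-dimension/covering argument over all balls in parameter space (Lemmas~\ref{techlem:neighbor}--\ref{techlem:neighbor2}) combined with the quantitative propagation-of-chaos coupling of \citet{mei2019meandimfree}; the ball inclusion $\mathcal B(\y,\tfrac12\gamma^*)\subseteq\M_N$ is then extracted through a four-step supporting-hyperplane contradiction argument. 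You instead reduce to \emph{finitely many} support points via the sphere-net plus Carath\'eodory construction, so that only $K$ fixed neighborhoods need to be hit by neurons of $\rho_T^N$ --- which follows from weak convergence plus Assumption~\ref{asm:density} and a union bound, with no VC machinery --- and your vertex-perturbation step is a one-line support-function inequality rather than the paper's contradiction argument. What the paper's route buys is a stronger intermediate statement with explicit dependence on $N$, $d$, and $p_0$; what yours buys is brevity and a cleaner geometric core. Two points you should tighten: (i) the ``portmanteau in probability'' step should be made explicit by sandwiching a bounded Lipschitz test function between $\mathbb{I}\{\norm{\cdot-\bar\th}\le r\}$ (mass $\ge p_0$ by Assumption~\ref{asm:density}) and $\mathbb{I}\{\norm{\cdot-\bar\th}< 2r\}$, since the mean-field convergence of \citet{song2018mean,mei2019meandimfree} is stated for Lipschitz observables, not for open sets of random empirical measures directly (alternatively, the paper's coupling of $\vv\vartheta_i(T)$ with i.i.d.\ draws from $\rho_T^\infty$ gives this immediately); and (ii) you implicitly use that a closed convex set contains a ball iff its support function dominates that of the ball in every unit direction --- true, but worth stating since it is the crux of both your Stage~1 net computation and Stage~3.
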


\subsubsection{Proof of Theorem \ref{thm:main}}
The above lemmas directly imply Theorem \ref{thm:main}.

\subsubsection{Proof of Lemma \ref{lem:inter_mf}}
In this proof, we simplify the statement that `for any $\delta>0$, when $N$ is sufficiently large, event  $E$ holds with probability at least $1-\delta$' by `when $N$ is sufficiently large, with high probability, event $E$ holds'.

By the Assumption~\ref{asm:inter},  there exists $\gamma^{*}>0$ such that
\[
\mathcal{B}\left(\y,\gamma^{*}\right)\subseteq\text{conv}\left\{ \bp(\th)\mid\th\in\supp(\rho_{T}^{\infty})\right\} = \mathcal M.
\]

%Define $\hat{\rho}^{N}(\th)=\frac{1}{N}\sum_{i=1}^{N}\delta_{\bar{\th}_{i}}(\th)$.
%Here $\bar{\th}_{i}$ represents the $i$-th i.i.d. sample from $\rho_T^\infty$. We define 
%\[
%\widehat{\M}_N := \text{conv}\left\{ \bp(\th)\mid\th\in\supp(\hat{\rho}^{N})\right\}.
%\]

Given
any $\th\in\supp(\rho_{T}^{\infty})$, define 
%(remind that $\supp$ denotes support of some distribution) define 
\[
\bp^{N}\left(\th\right)=\underset{\th'\in\supp\left(\rho_{T}^{N}\right)}{\arg\min}\left\Vert \bp(\th')-\bp(\th)\right\Vert 
\]
where $\bp^{N}(\th)$ is the best approximation of $\bp(\th)$ using the points $\bp(\th_{i}),\th_{i}\in\supp(\rho_{T}^{N})$.

Using Lemma \ref{techlem:neighbor3}, by choosing $\epsilon=\gamma^*/6$,
when $N$ is sufficiently
large, we have 
\begin{equation}\label{eq:appendix_lemma5_eq1}
\underset{\th\in\supp(\rho_{T}^{\infty})}{\sup}\left\Vert \bp(\th)-\bp^{N}(\th)\right\Vert \le\gamma^*/6,
\end{equation}
with high probability. \eqref{eq:appendix_lemma5_eq1} implies that $\M_N$ can approximate $\M$ for large $N$. Since $\M$ is assumed to contain the ball centered at $\y$ with radius $\gamma^*$, as $\M_N$ approximates $\M$, intuitively $\M_N$ would also contain the ball centered at $\y$ with a smaller radius.
And below we give a rigorous proof for this intuition.

\paragraph{Step 1: $\left\Vert \hat{\y}-\y\right\Vert \le\gamma^{*}/6$.}
When $N$ is sufficiently large, with high probability, we have
\begin{align*}
 \left\Vert \hat{\y}-\y\right\Vert
\le \sum_{i=1}^{M}q_{i}\left\Vert \bp^{N}(\th_{i}^{*})-\bp(\th_{i}^{*})\right\Vert \le\gamma^{*}/6.
\end{align*}
%If we can show that $\mathcal{B}\left(\hat{\y},\frac{3}{4}\gamma^{*}\right)\in\widehat{\M}_N $,
%then simple triangle inequality gives the desired result.

\paragraph{Step 2 $\mathcal{B}\left(\hat{\protect\y},\frac{5}{6}\gamma^{*}\right)\subseteq\protect\M$}

By step one, with high probability, $\left\Vert \hat{\y}-\y\right\Vert \le\gamma^{*}/4$,
which implies that $\hat{\y}\in\mathcal{B}\left(\y,\gamma^{*}/4\right)\subseteq\mathcal{B}\left(\y,\gamma^{*}\right)\subseteq\M$.
Also, for any $A\in\partial\M$ (here $\partial\M$ denotes the boundary
of $\M$), we have 
\[
\left\Vert \hat{\y}-A\right\Vert \ge\left\Vert \y-A\right\Vert -\left\Vert \y-\hat{\y}\right\Vert \ge\gamma^{*}-\gamma^{*}/4.
\]
This gives that $\mathcal{B}\left(\hat{\y},\frac{5}{6}\gamma^{*}\right)\subseteq\M$.

\paragraph{Step 3 $\mathcal{B}\left(\hat{\protect\y},\frac{2}{3}\gamma^{*}\right)\subseteq\protect\M_{N}$}

Notice that $\hat{\y}$ is a point in $\R^{m}$ and suppose that $A$
belongs to the boundary of $\M_{N}$ (denoted by $\partial\M_{N}$)
such that 
\[
\left\Vert \hat{\y}-A\right\Vert =\min_{\tilde{A}\in\partial\M_{N}}\left\Vert \hat{\y}-\tilde{A}\right\Vert .
\]
We prove by contradiction. Suppose that we have $\left\Vert \hat{\y}-A\right\Vert <\frac{2}{3}\gamma^{*}$. 

Using support hyperplane theorem, there exists a hyperplane $P=\{\boldsymbol{u}:\left\langle \boldsymbol{u}-A,\boldsymbol{v}\right\rangle =0\}$
for some nonempty vector $\boldsymbol{v}$, such that $A\in P$ and
\[
\sup_{\boldsymbol{q}\in\M_{N}}\left\langle \boldsymbol{q},\boldsymbol{v}\right\rangle \le\left\langle A,\boldsymbol{v}\right\rangle .
\]
We choose $A'\in P$ such that $A'-\hat{\y}\perp P$ ($A$ and $A'$
can be the same point). Notice that 
\[
\left\Vert \hat{\y}-A'\right\Vert ^{2}=\left\Vert \hat{\y}-A+A-A'\right\Vert ^{2}=\left\Vert \hat{\y}-A\right\Vert ^{2}+\left\Vert A-A'\right\Vert ^{2}+2\left\langle \hat{\y}-A,A-A'\right\rangle .
\]
Since $A'-\hat{\y}\perp P$ and $A,A'\in P$, we have $\left\langle \hat{\y}-A,A-A'\right\rangle =0$
and thus $\left\Vert \hat{\y}-A'\right\Vert \le\left\Vert \hat{\y}-A\right\Vert <\frac{2}{3}\gamma^{*}$.
We have 
\[
A'\in\mathcal{B}\left(\hat{\y},\left\Vert \hat{\y}-A\right\Vert \right)\subseteq\mathcal{B}\left(\hat{\y},\frac{2}{3}\gamma^{*}\right)\subseteq\mathcal{B}\left(\hat{\y},\frac{5}{6}\gamma^{*}\right)\subseteq\M.
\]
Notice that as both $\hat{\y},A'\in\M$ we choose $\lambda\ge1$ such
that $\hat{\y}+\lambda\left(A'-\hat{\y}\right)\in\partial\M$, where
$\partial\M$ denotes the boundary of $\M$. Define $B=\hat{\y}+\lambda\left(A'-\hat{\y}\right)$.
As we have shown that $\mathcal{B}\left(\hat{\y},\frac{5}{6}\gamma^{*}\right)\subseteq\M$,
we have $\left\Vert \hat{\y}-B\right\Vert \ge\frac{5}{6}\gamma^{*}$.
And thus
\begin{align*}
\left\Vert B-A'\right\Vert  & =\left\Vert B-\hat{\y}\right\Vert -\left\Vert \hat{\y}-A'\right\Vert \\
 & >\frac{5}{6}\gamma^{*}-\frac{2}{3}\gamma^{*}\\
 & >\frac{1}{6}\gamma^{*}.
\end{align*}
 Also notice that 
\begin{align*}
\left\langle B-A,\boldsymbol{v}\right\rangle  & =\left\langle \hat{\y}+\lambda\left(A'-\hat{\y}\right)-A,\boldsymbol{v}\right\rangle \\
 & =(1-\lambda)\left\langle \hat{\y}-A,\boldsymbol{v}\right\rangle +\lambda\left\langle A'-A,\boldsymbol{v}\right\rangle \\
 & =(1-\lambda)\left\langle \hat{\y}-A,\boldsymbol{v}\right\rangle \\
 & \ge0.
\end{align*}
This implies that $B$ and $\M$ are on different side of $P$.

With high probability, we are able to find $D\in\{\bp(\th);\th\in\supp(\rho_{T}^{N})\}$
such that 
\[
\left\Vert D-B\right\Vert \le\frac{\gamma^{*}}{6}.
\]
By the definition, $D\in\M_{N}$ and thus $\left\langle D-A,\boldsymbol{v}\right\rangle \le0$
as shown by the supporting hyperplane theorem. Also remind that $\left\langle B-A,\boldsymbol{v}\right\rangle \ge0$.
These allow us to choose $\lambda'\in[0,1]$ such that 
\[
\left\langle \lambda'D+(1-\lambda')B-A,\boldsymbol{v}\right\rangle =0.
\]
We define $E=\lambda'D+(1-\lambda')B$ and thus $E\in P$. Notice
that 
\[
\left\Vert B-E\right\Vert =\left\Vert B-\lambda'D-(1-\lambda')B\right\Vert =\lambda'\left\Vert B-D\right\Vert \le\left\Vert B-D\right\Vert \le\frac{\gamma^{*}}{6}.
\]
Also, 
\[
\left\Vert B-E\right\Vert ^{2}=\left\Vert B-A'+A'-E\right\Vert ^{2}=\left\Vert B-A'\right\Vert ^{2}+\left\Vert A'-E\right\Vert ^{2}+2\left\langle B-A',A'-E\right\rangle .
\]
As $B-A'\perp P$ and $A',E\in P$, we have $\left\langle B-A',A'-E\right\rangle =0$,
which implies that $\left\Vert B-E\right\Vert \ge\left\Vert B-A'\right\Vert >\frac{1}{6}\gamma^{*}$,
which makes contradiction.

\paragraph{Step 4 $\mathcal{B}\left(\protect\y,\frac{1}{2}\gamma^{*}\right)\subseteq\protect\M_{N}$}

As for sufficiently large $N$, we have $\left\Vert \hat{\y}-\y\right\Vert \le\frac{1}{6}\gamma^{*}$
and thus 
\[
\mathcal{B}\left(\y,\frac{1}{2}\gamma^{*}\right)\subseteq\mathcal{B}\left(\hat{\y},\frac{2}{3}\gamma^{*}\right)\subseteq\M_{N}.
\]

\section{Technical Lemmas}

\begin{lemma} \label{techlem:bound}
Under assumption \ref{asm:bound_smooth} and \ref{asm:init}, for any $N$, at training time $T<\infty$, for any $\th\in\supp(\rho_{T}^{N})$ or $\th\in\supp(\rho_{T}^{\infty})$, we have $\left\Vert \th\right\Vert \le C$, $\left\Vert \bp(\th)\right\Vert \le C$ and $\left\Vert \bp(\th)\right\Vert _{\text{Lip}}\le C$ for some constant $C<\infty$. 
\end{lemma}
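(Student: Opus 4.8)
The plan is to propagate the boundedness of the initialization through the gradient flow by a Gr\"onwall argument applied separately to the two blocks $\boldsymbol a_i$ and $b_i$ of $\vv\vartheta_i=[\boldsymbol a_i,b_i]$, and then to read the bounds on $\bp(\th)$ and on its Lipschitz constant straight off the formula $\phi_j(\th)=b\,\sigma_+(\boldsymbol a^\top\x^{(j)})/\sqrt m$ once $\th$ is known to lie in a fixed ball.

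First I would record that, by Assumption~\ref{asm:init}, $\supp(\rho_0)$ is contained in a ball of radius $R_0<\infty$, so $\norm{\vv\vartheta_i(0)}\le R_0$ for every $i$. Writing the gradient out gives $\tfrac{d}{dt}b_i(t)=\E_{(\x,y)\sim\mathcal D_m}[(y-f(\x;\vv\vartheta(t)))\,\sigma_+(\boldsymbol a_i^\top\x)]$ and $\tfrac{d}{dt}\boldsymbol a_i(t)=\E_{(\x,y)\sim\mathcal D_m}[(y-f(\x;\vv\vartheta(t)))\,b_i\,\sigma_+'(\boldsymbol a_i^\top\x)\,\x]$. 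Using Assumption~\ref{asm:bound_smooth} (so $|y|\le c_1$, $\norm\x\le c_1$, $\norm{\sigma_+}_\infty\le c_1$ and $|\sigma_+'|\le c_1$ almost everywhere) together with the crude bound $|f(\x;\vv\vartheta(t))|\le c_1\bar B(t)$, where $\bar B(t):=\tfrac1N\sum_j|b_j(t)|$, I obtain the pointwise estimates $|\tfrac{d}{dt}b_i(t)|\le c_1^2(1+\bar B(t))$ and $\norm{\tfrac{d}{dt}\boldsymbol a_i(t)}\le c_1^3\,|b_i(t)|\,(1+\bar B(t))$. Averaging the first over $i$ yields the differential inequality $\tfrac{d}{dt}\bar B(t)\le c_1^2(1+\bar B(t))$, so Gr\"onwall gives $1+\bar B(t)\le (1+R_0)e^{c_1^2 T}=:C_0$ for all $t\le T$, uniformly in $N$. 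Feeding this back in and integrating in time produces explicit constants $C_b:=R_0+c_1^2 T C_0$ and $C_a:=R_0+c_1^3 C_b C_0 T$ such that $|b_i(t)|\le C_b$ and $\norm{\boldsymbol a_i(t)}\le C_a$, hence $\norm{\vv\vartheta_i(t)}\le\sqrt{C_a^2+C_b^2}$; in particular $\supp(\rho_T^N)$ lies in a fixed ball for every $N$.

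Next I would treat $\rho_T^\infty$ by running the identical estimate along the characteristic flow of the mean-field PDE~\eqref{equ:rhoinfty}: invoking the well-posedness theory of \citet{song2018mean,mei2019meandimfree}, $\rho_t^\infty$ is the pushforward of $\rho_0$ under the (continuous) flow map $\Phi_t$ generated by the vector field $\vv g[\rho_t^\infty]$, and since $|f_{\rho_t^\infty}(\x)|\le c_1\,\E_{\vv\vartheta\sim\rho_t^\infty}[|b|]$ the same differential inequality controls $\E_{\rho_t^\infty}[|b|]$ and then the norm of each trajectory; because $\supp(\rho_0)$ is bounded and $\Phi_T$ is continuous, $\supp(\rho_T^\infty)=\overline{\Phi_T(\supp(\rho_0))}$ is contained in the same ball. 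Finally, for any $\th=[\boldsymbol a,b]$ in either support we have $\norm{\bp(\th)}^2=\tfrac1m\sum_{j=1}^m b^2\sigma_+(\boldsymbol a^\top\x^{(j)})^2\le c_1^2 b^2\le c_1^2 C_b^2$, while the Jacobian of $\th\mapsto\bp(\th)$ has rows $\tfrac1{\sqrt m}\nabla_\th\big[b\,\sigma_+(\boldsymbol a^\top\x^{(j)})\big]$ of norm at most $\tfrac{c_1}{\sqrt m}\sqrt{1+c_1^2 b^2}$, so its Frobenius norm---hence the Lipschitz constant of $\bp$ restricted to the fixed ball---is at most $c_1\sqrt{1+c_1^2 C_b^2}$. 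Taking $C$ to be the maximum of the finitely many constants produced above finishes the proof.

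The step I expect to be the main obstacle is the $\rho_T^\infty$ case: one must make rigorous that $\rho_T^\infty$ is genuinely transported by a continuous flow whose trajectories satisfy the same a priori bound, which borrows the existence/uniqueness theory for the McKean--Vlasov equation and, strictly speaking, requires Assumption~\ref{asm:smooth_2} (not listed in the lemma but implicitly in force whenever $\rho_T^\infty$ is discussed, since ReLU is not covered by it). The $\rho_T^N$ part, by contrast, is a routine Gr\"onwall computation once the gradient is written out explicitly.
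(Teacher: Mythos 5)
Your proof is correct, and its overall architecture matches the paper's: bound $|\frac{d}{dt}b_i(t)|$ first, feed that into the bound on $\|\frac{d}{dt}\boldsymbol{a}_i(t)\|$, integrate over $[0,T]$ to trap $\supp(\rho_T^N)$ and $\supp(\rho_T^\infty)$ in a fixed ball, and then read the bounds on $\bp$ and $\|\bp\|_{\lip}$ off the explicit form of the feature map. The one step where you diverge is how the residual $|y-f(\x;\vv\vartheta(t))|$ is controlled: you close a Gr\"onwall loop on $\bar B(t)=\frac1N\sum_j|b_j(t)|$, yielding constants of order $e^{c_1^2T}$, whereas the paper simply uses Cauchy--Schwarz together with the fact that the loss is non-increasing along gradient flow, so that $\sqrt{\E(y-f_{\rho_t^N}(\x))^2}\le\sqrt{\E(y-f_{\rho_0^N}(\x))^2}\le C$ for all $t$, giving constants polynomial in $T$. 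Your route is slightly more self-contained (it does not invoke the descent property of the flow) at the cost of an exponential constant; the paper's is shorter and sharper in $T$. Your closing remark is also on point: the paper is equally terse about the mean-field case (``this also holds to training the network with infinite number of neurons''), and making the transport of $\rho_T^\infty$ along characteristics rigorous does implicitly lean on the well-posedness theory that requires Assumption~\ref{asm:smooth_2}, even though that assumption is not listed in the lemma's hypotheses.
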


\begin{lemma} \label{techlem:neighbor}
Suppose $\th_{i}\in\R^{d}$, $i=1,...,N$ are i.i.d. samples from
some distribution $\rho$ and $\Omega\subseteq\R^{d}$ is bounded.
For any radius $r_{B}>0$ and $\delta>0$, define the following two
sets
\begin{align*}
A & =\left\{ \th_{B}\in\Omega\, \middle|\, \mathbb{P}_{\th\sim\rho}\left(\th\in \mathcal{B}\left(\th_B,r_B\right)\right)>\frac{4}{N}\left(\left(d+1\right)\log\left(2N\right)+\log\left(8/\delta\right)\right)\right\} \\
B & =\left\{ \th_{B}\in\Omega\,\middle|\,\ \left\Vert \th_{B}-\th_{B}^{N}\right\Vert \le r_{B}\right\} ,
\end{align*}
where $\th_{B}^{N}=\underset{\th'\in\{\th_{i}\}_{i=1}^{N}}{\arg\min}\left\Vert \th_{B}-\th'\right\Vert .$
With probability at least $1-\delta$, $A\subseteq B$.
\end{lemma}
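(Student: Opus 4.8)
The plan is to show that the event $A \subseteq B$ fails only when there exists some ball $\mathcal B(\th_B, r_B)$ with $\th_B \in \Omega$ carrying at least $4N^{-1}((d{+}1)\log(2N) + \log(8/\delta))$ mass under $\rho$ but containing \emph{none} of the $N$ i.i.d. samples $\{\th_i\}$. So I would fix attention on the collection of ``bad'' balls: $\mathcal G := \{\mathcal B(\th_B, r_B) : \th_B \in \Omega\}$, and I want to bound $\mathbb P(\exists\, G \in \mathcal G \text{ with } \rho(G) > p^* \text{ and } G \cap \{\th_i\} = \emptyset)$ where $p^*$ is the threshold in the definition of $A$. The technical heart is a uniform (VC-type) concentration argument over this infinite family of balls, exploiting that the family of Euclidean balls in $\R^d$ has VC dimension $d+1$.

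First I would recall the standard relative / multiplicative VC deviation bound (e.g. the one-sided version from Vapnik–Chervonenkis or Anthony–Bartlett): for a class $\mathcal C$ of VC dimension $V$, with probability at least $1-\delta$, simultaneously for all $C \in \mathcal C$, if the empirical frequency $\hat\rho_N(C) = 0$ then $\rho(C) \le \frac{2}{N}(V \log(2N) + \log(2/\delta))$ — or some variant with these exact constants $4$, $(d{+}1)$, $8$ that the statement is tuned to. The second step is to instantiate $\mathcal C$ as the class of closed balls $\{\mathcal B(\z, r) : \z \in \R^d,\ r \ge 0\}$, which has VC dimension exactly $d+1$; this is a classical fact I would cite rather than reprove. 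The third step is bookkeeping: plug $V = d+1$ into the bound, note the constants are chosen so the resulting exceptional probability is $\le \delta$, and then observe that on the complementary event, any $\th_B \in A$ has $\rho(\mathcal B(\th_B, r_B)) > p^*$, hence $\hat\rho_N(\mathcal B(\th_B, r_B)) > 0$, i.e. some sample $\th_i$ lies in $\mathcal B(\th_B, r_B)$, so $\|\th_B - \th_B^N\| \le r_B$, i.e. $\th_B \in B$. That gives $A \subseteq B$ with probability at least $1 - \delta$.

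The main obstacle is matching the exact numerical constants in the statement ($\tfrac{4}{N}$, the $8/\delta$ inside the log, the factor $d+1$ rather than $d+1$ plus lower-order terms) to an off-the-shelf VC inequality; different references state the multiplicative Chernoff–VC bound with slightly different constants, growth-function versus VC-dimension formulations, and $\log(2N)$ versus $\log(eN/V)$ factors. I would handle this by using the bound in the form with growth function $\Pi_{\mathcal C}(N) \le (2N)^{d+1}$ (Sauer–Shelah, in the convenient loose form valid for $N \ge d+1$), so that a union bound over a symmetrization/ghost-sample argument produces exactly the $(d+1)\log(2N)$ term, and choosing the $1/2$-relative-deviation version of the bound which is what yields the clean ``empirical count zero $\Rightarrow$ true mass $\le \frac{4}{N}(\cdots)$'' conclusion. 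A minor secondary point is that $\Omega$ bounded is only needed to ensure the balls $\mathcal B(\th_B, r_B)$ form a legitimate subfamily of the ball class and that $\th_B^N$ is well defined; it plays no quantitative role.
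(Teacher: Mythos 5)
Your proposal is correct and follows essentially the same route as the paper's proof: the paper also reduces the claim to a uniform relative (multiplicative) VC deviation bound over the class of indicator functions of balls $\mathcal B(\th_B,r_B)$, uses that this class has VC dimension at most $d+1$ (citing Dudley), and concludes that any $\th_B$ whose ball has mass exceeding the threshold must contain at least one sample, hence lies in $B$. The specific constants come from Theorem~15 of \citet{chaudhuri2010rates}, which gives exactly the bound $\mathbb{E}g - \mathbb{E}_N g \le \beta_N\sqrt{\mathbb{E}g}$ with $\beta_N^2 = \tfrac{4}{N}((d+1)\log(2N)+\log(8/\delta))$, matching the form you anticipated.
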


\begin{lemma} \label{techlem:neighbor2}
For any $\delta>0$ and $\epsilon>0$, when $N$ is sufficiently large ($N$ depends on $\delta$), with probability
at least $1-\delta$, we have 
\[
\underset{\th\in\supp(\rho_{T}^{\infty})}{\sup}\left\Vert \bp(\th)-\bar{\bp}^{N}(\th)\right\Vert \le\epsilon,
\]
where $\bar{\bp}^{N}(\th)=\underset{\bp(\bar{\th}')\in\{\bp(\bar{\th}_{i})\}_{i=1}^{N}}{\arg\min}\left\Vert \bp(\bar{\th}')-\bp(\th)\right\Vert $ and $\bar{\th}_i$ are i.i.d. samples from $\rho_T^\infty$.
\end{lemma}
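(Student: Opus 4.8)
The plan is to reduce the feature-space covering statement to a parameter-space covering statement and then invoke the uniform covering bound of Lemma~\ref{techlem:neighbor} with $\rho=\rho_T^\infty$. Two facts make this possible. First, by Lemma~\ref{techlem:bound}, the map $\th\mapsto\bp(\th)$ is $C$-Lipschitz on $\supp(\rho_T^\infty)$, and $\norm{\th}\le C$ there, so $\supp(\rho_T^\infty)$ is contained in a bounded set; take $\Omega:=\supp(\rho_T^\infty)$. Second, by Assumption~\ref{asm:density}, for any radius $r_0\in(0,\gamma^*]$ there is $p_0=p_0(r_0)>0$ such that $\mathbb{P}_{\th'\sim\rho_T^\infty}(\norm{\th'-\th}\le r_0)\ge p_0$ for \emph{every} $\th\in\supp(\rho_T^\infty)$.

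First I would fix the target radius $r:=\min(\epsilon/C,\ \gamma^*)\in(0,\gamma^*]$, chosen so that $Cr\le\epsilon$. Applying Assumption~\ref{asm:density} with $r_0=r$ produces a uniform mass lower bound $p_0>0$. Since $\tfrac{4}{N}\bigl((d+1)\log(2N)+\log(8/\delta)\bigr)\to0$ as $N\to\infty$, for all sufficiently large $N$ (depending on $\delta$, $r$, $d$) we have $\tfrac{4}{N}\bigl((d+1)\log(2N)+\log(8/\delta)\bigr)<p_0$. Hence, instantiating Lemma~\ref{techlem:neighbor} with $\Omega=\supp(\rho_T^\infty)$, $\rho=\rho_T^\infty$, $r_B=r$, and samples $\{\th_i\}=\{\bar\th_i\}_{i=1}^N$, every $\th\in\supp(\rho_T^\infty)$ lies in the set $A$, i.e.\ $A=\Omega$. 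Lemma~\ref{techlem:neighbor} then gives, with probability at least $1-\delta$, that $B=\Omega$ as well, meaning $\sup_{\th\in\supp(\rho_T^\infty)}\norm{\th-\th_B^N}\le r$, where $\th_B^N$ is the nearest sample to $\th$ among $\{\bar\th_i\}_{i=1}^N$ in parameter space.

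To finish, observe that the definition of $\bar\bp^N(\th)$ as an $\arg\min$ over $\{\bp(\bar\th_i)\}_{i=1}^N$ yields $\norm{\bp(\th)-\bar\bp^N(\th)}\le\norm{\bp(\th)-\bp(\th_B^N)}$, since $\bp(\th_B^N)$ is one of the competing candidates. Because $\bar\th_i\in\supp(\rho_T^\infty)$ almost surely, the Lipschitz bound of Lemma~\ref{techlem:bound} applies to both $\th$ and $\th_B^N$, giving $\norm{\bp(\th)-\bp(\th_B^N)}\le C\norm{\th-\th_B^N}$. Taking the supremum over $\th\in\supp(\rho_T^\infty)$ and combining with the previous paragraph, on the same event of probability at least $1-\delta$ we obtain $\sup_{\th}\norm{\bp(\th)-\bar\bp^N(\th)}\le Cr\le\epsilon$, as claimed.

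The bulk of the difficulty is already absorbed into Lemma~\ref{techlem:neighbor}, a uniform (VC-type) covering argument over the uncountable set $\Omega$, which we take as given. The remaining subtleties are mild bookkeeping: ensuring $r\le\gamma^*$ so that Assumption~\ref{asm:density} is applicable, checking that the $\tfrac{\log N}{N}$ threshold eventually drops below the fixed $p_0$, and noting that although $\bar\bp^N(\th)$ is the nearest neighbor in \emph{feature} space rather than \emph{parameter} space, the inequality we need runs only in the favorable direction, so the Lipschitz reduction is valid.
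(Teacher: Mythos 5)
Your proposal is correct and follows essentially the same route as the paper's proof: choose the parameter-space radius so that $Cr\le\epsilon$, use Assumption~\ref{asm:density} to place all of $\supp(\rho_T^\infty)$ in the set $A$ of Lemma~\ref{techlem:neighbor}, conclude a uniform parameter-space nearest-neighbor bound, and transfer it to feature space via the Lipschitz bound of Lemma~\ref{techlem:bound} together with the $\arg\min$ property of $\bar{\bp}^N$. Your explicit cap $r\le\gamma^*$ and the remark that the feature-space $\arg\min$ only helps are small points of added care over the paper's terser write-up.
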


\begin{lemma} \label{techlem:neighbor3}
For any $\delta>0$ and $\epsilon>0$, when $N$ is sufficiently large ($N$ depends on $\delta$), with probability
at least $1-\delta$, we have 
\[
\underset{\th\in\supp(\rho_{T}^{\infty})}{\sup}\left\Vert \bp(\th)-\bp^{N}(\th)\right\Vert \le\epsilon,
\]
where $\bp^{N}\left(\th\right)=\underset{\th'\in\supp\left(\rho_{T}^{N}\right)}{\arg\min}\left\Vert \bp(\th')-\bp(\th)\right\Vert$.
\end{lemma}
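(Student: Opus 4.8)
The plan is to deduce Lemma~\ref{techlem:neighbor3}, which concerns the \emph{trained} particles $\supp(\rho_T^N)=\{\vv\vartheta_1(T),\dots,\vv\vartheta_N(T)\}$, from Lemma~\ref{techlem:neighbor2}, which already handles i.i.d.\ samples from $\rho_T^\infty$, by a \emph{synchronous coupling} of the finite-particle gradient flow with the mean-field dynamics. On the probability space carrying \eqref{equ:gd}, I would introduce auxiliary particles $\{\bar{\vv\vartheta}_i(t)\}$ with the \emph{same} random start $\bar{\vv\vartheta}_i(0)=\vv\vartheta_i(0)\sim\rho_0$ but evolving along the characteristics of the limit PDE, $\tfrac{d}{dt}\bar{\vv\vartheta}_i(t)=\grad[\rho_t^\infty](\bar{\vv\vartheta}_i(t))$, where $\rho_t^\infty$ is the \emph{deterministic} solution of \eqref{equ:rhoinfty}. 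Since this auxiliary ODE is driven by a field that does not depend on the particles, $\bar{\vv\vartheta}_i(T)=X_T(\vv\vartheta_i(0))$ for the deterministic flow map $X_T$, so $\bar{\vv\vartheta}_1(T),\dots,\bar{\vv\vartheta}_N(T)$ are i.i.d.\ with law $(X_T)_\#\rho_0=\rho_T^\infty$; these are exactly the samples $\{\bar\th_i\}$ to which Lemma~\ref{techlem:neighbor2} applies.

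The second ingredient is a quantitative propagation-of-chaos estimate. Under Assumptions~\ref{asm:bound_smooth}, \ref{asm:init} and \ref{asm:smooth_2}, both particle systems stay in a fixed compact set for all $t\le T$ (Lemma~\ref{techlem:bound}), on which $\grad[\rho_t^\infty]$ is Lipschitz, so the mean-field analysis of \citet{song2018mean,mei2019meandimfree} gives, for each fixed $i$, a deviation $\norm{\vv\vartheta_i(T)-\bar{\vv\vartheta}_i(T)}$ of order $1/\sqrt N$ (up to the $\rho_t^N$-vs-$\rho_t^\infty$ error); a union bound over $i\in[N]$, which costs only a $\sqrt{\log N}$ factor, yields that for any $\delta>0$, for $N$ large, with probability at least $1-\delta/2$,
\[
\max_{i\in[N]}\ \norm{\vv\vartheta_i(T)-\bar{\vv\vartheta}_i(T)}\ \le\ \eta_N
\]
for some sequence $\eta_N\to 0$. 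By the Lipschitzness of the feature map $\th\mapsto\bp(\th)$ on that compact set (Assumption~\ref{asm:bound_smooth} and Lemma~\ref{techlem:bound}), on the same event $\max_{i}\norm{\bp(\vv\vartheta_i(T))-\bp(\bar{\vv\vartheta}_i(T))}\le C\eta_N$, with $C$ the Lipschitz constant.

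It then remains to intersect two high-probability events. Applying Lemma~\ref{techlem:neighbor2} to the i.i.d.\ samples $\bar{\vv\vartheta}_i(T)$: for $N$ large, with probability at least $1-\delta/2$, every $\th\in\supp(\rho_T^\infty)$ has an index $i(\th)$ with $\norm{\bp(\th)-\bp(\bar{\vv\vartheta}_{i(\th)}(T))}\le\epsilon/2$. On the intersection of the two $(1-\delta/2)$-events, the corresponding trained particle $\vv\vartheta_{i(\th)}(T)\in\supp(\rho_T^N)$ satisfies $\norm{\bp(\th)-\bp(\vv\vartheta_{i(\th)}(T))}\le\epsilon/2+C\eta_N\le\epsilon$ once $N$ is large enough that $C\eta_N\le\epsilon/2$; since $\bp^N(\th)$ is by definition the element of $\{\bp(\th'):\th'\in\supp(\rho_T^N)\}$ closest to $\bp(\th)$, this bounds $\norm{\bp(\th)-\bp^N(\th)}$, and taking the supremum over $\th$ (with total failure probability $\le\delta$) proves the claim.

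The step I expect to be the main obstacle is the propagation-of-chaos bound with \emph{uniform-over-particles} (maximum, not average) control of $\norm{\vv\vartheta_i(T)-\bar{\vv\vartheta}_i(T)}$ together with an explicit rate $\eta_N\to 0$: this is where the smoothness of the activation (Assumption~\ref{asm:smooth_2}, which also makes \eqref{equ:rhoinfty} well posed), the boundedness from Lemma~\ref{techlem:bound}, and the finiteness of $T$ all enter, via a Grönwall argument combined with a concentration/union-bound step. The rest is bookkeeping layered on top of Lemmas~\ref{techlem:bound} and~\ref{techlem:neighbor2}.
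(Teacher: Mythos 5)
Your proposal matches the paper's proof essentially step for step: the paper also introduces the synchronously coupled particles $\bar{\th}_i(t)$ driven by $\grad[\cdot,\rho_t^\infty]$ from the same initialization, invokes the propagation-of-chaos bound of \citet{mei2019meandimfree} (Proposition 2 of their Appendix B.2) to control $\max_{i\in[N]}\|\bar{\th}_i(T)-\th_i(T)\|$ at rate $\mathcal{O}(N^{-1/2}(\sqrt{\log N}+\sqrt{\log 1/\delta}))$, applies Lemma~\ref{techlem:neighbor2} to the i.i.d.\ samples $\bar{\th}_i(T)$, and combines the two events via the Lipschitzness of $\bp$ from Lemma~\ref{techlem:bound}. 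The step you flag as the main obstacle is handled in the paper exactly as you suggest, by citing the existing uniform-over-particles coupling estimate rather than reproving it.
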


\subsection{Proof of Lemma \ref{techlem:bound}}

We prove the case of training network with $N$ neurons. Notice that
\begin{align*}
\left\Vert \frac{\partial}{\partial t}\th(t)\right\Vert  & =\left\Vert \grad[\th(t),\rho_{t}^{N}]\right\Vert \\
 & =\left\Vert \E_{\x,y\sim\D}\left(y-f_{\rho_{t}^{N}}(\x)\right)\nabla_{\th}\sigma(\th(t),\x)\right\Vert \\
 & \le\sqrt{\E_{\x,y\sim\D}\left(y-f_{\rho_{t}^{N}}(\x)\right)^{2}}\sqrt{\E_{\x,y\sim\D}\left\Vert \nabla_{\th}\sigma(\th(t),\x)\right\Vert ^{2}}\\
 & \le\sqrt{\E_{\x,y\sim\D}\left(y-f_{\rho_{0}^{N}}(\x)\right)^{2}}\sqrt{\E_{\x,y\sim\D}\left\Vert \nabla_{\th}\sigma(\th(t),\x)\right\Vert ^{2}}
\end{align*}
Notice that by the assumption \ref{asm:bound_smooth}, we have $\sqrt{\E_{\x,y\sim\D}\left(y-f_{\rho_{0}^{N}}(\x)\right)^{2}}\le C$.
Remind that $\th(t)=[\boldsymbol{a}(t),b(t)]$, $\sigma(\th(t),\x)=b(t)\sigma_{+}(\boldsymbol{a}^\top(t)\x)$.
Thus we have
\[
\left|\frac{\partial}{\partial t}b(t)\right|\le C\left\Vert \sigma_{+}\right\Vert _{\infty}.
\]
And thus for any $i\in\{1,...,N\}$, $\underset{t\in[0,T]}{\sup}\left\Vert b_{i}(t)\right\Vert \le\int_{0}^{T}\left\Vert \frac{\partial}{\partial t}b_{i}(s)\right\Vert ds\le TC.$
Also
\begin{align*}
\left\Vert \frac{\partial}{\partial t}\boldsymbol{a}(t)\right\Vert  & \le C|b(t)|\left\Vert \sigma_{+}'\right\Vert _{\infty}\sqrt{\E_{\x\sim\mathcal{D}}\left\Vert \x\right\Vert ^{2}}\\
 & \le TC.
\end{align*}
By assumption \ref{asm:init}, that $\left\Vert \th_{0}(t)\right\Vert \le C$,
we have 
\[
\underset{t\in[0,T]}{\sup}\left\Vert \th_{i}(t)\right\Vert \le\int_{0}^{T}\left\Vert \frac{\partial}{\partial t}\th_{i}(s)\right\Vert ds\le T^{2}C.
\]
 Notice that this also holds to training the network with infinite number of neurons. Notice that
$\left\Vert \bp(\th)\right\Vert =\sqrt{\frac{1}{m}\sum_{j=1}^{m}\sigma^{2}(\th,\x^{(j)})}\le CT$. And
\begin{align*}
\left\Vert \bp(\th)\right\Vert _{\text{Lip}} & =\sup_{\th_1,\th_2}\frac{\left\Vert \bp(\th_{1})-\bp(\th_{2})\right\Vert }{\left\Vert \th_{1}-\th_{2}\right\Vert }\\
 & =\sup_{\th_1,\th_2}\frac{\sqrt{\frac{1}{m}\sum_{j=1}^{m}\left(\sigma(\th_{1},\x^{(j)})-\sigma(\th_{2},\x^{(j)})\right)^{2}}}{\left\Vert \th_{1}-\th_{2}\right\Vert }\\
 & \le TC\left\Vert \sigma_{+}\right\Vert _{\lip}+\left\Vert \sigma_{+}\right\Vert _{\infty}.
\end{align*}
Thus given any $T<\infty$, all those three quantities can be bounded
by some constant.

\subsection{Proof of Lemma \ref{techlem:neighbor}}

The following proof follows line 1 and and line 2 of the proof of Lemma 16 of \cite{chaudhuri2010rates}.

Define $g_{\th_{B}}(\th)=\mathbb{I}\left\{ \th\in\mathcal{B}\left(\th_{B},r_{B}\right)\right\} $
and $\beta_{N}=\sqrt{(4/N)(d_{\text{VC}}\log2N+\log(8/\delta))}$,
where $d_{\text{VC}}$ is the VC dimension of the function class $\mathcal{G}=\{g_{\th_{B}},\th_{B}\in\Omega\}$
and thus $d_{\text{VC}}\leq d+1$ \cite{dudley1979balls}. 
Let $\mathbb{E}g_{\th_{B}}=\mathbb{P}_{\th\sim\rho}\left(\th\in\mathcal{B}\left(\th_{B},r_{B}\right)\right)$ and $\mathbb{E}_N g_{\th_{B}}=\sum_{i=1}^{N}g_{\th_{B}}(\th_i)/N$.
So
$$
A = \{ \th_{B}\, |\, \mathbb{E}g_{\th_{B}}>\beta_N^2 \}
$$
and we further define
\begin{equation*}
A_2 =\left\{ \th_{B}\, |\, \mathbb{E}_N g_{\th_{B}} > 0 \right\} .
\end{equation*}

From theorem 15 of \cite{chaudhuri2010rates} (which is a rephrase of the generalization bound), we know that: for any $\delta>0$, with probability at least $1-\delta$, the following holds for all $g_{\th_{B}}\in\mathcal{G}$, 
\begin{equation}\label{eq: lemma8}
    \mathbb{E}g_{\th_{B}}-\mathbb{E}_N g_{\th_{B}}\leq \beta_N \sqrt{\mathbb{E}g_{\th_{B}}}
\end{equation}
Notice that for any $g_{\th_{B}}$ which satisfies \eqref{eq: lemma8}, 
$$
\mathbb{E}g_{\th_{B}}>\beta_N^2\Rightarrow \mathbb{E}_N g_{\th_{B}} > 0
$$
So this means: for any $\delta>0$, with probability at least $1-\delta$,
\begin{equation*}
    A\subseteq A_2=B
\end{equation*}
where the last equality follows from the following:
\begin{equation*}
\begin{split}
    A_2 &=\left\{ \th_{B}\, \middle|\, \mathbb{E}_N g_{\th_{B}} > 0 \right\}
    =\left\{\text{there exists some}\,\, \th_i\,\, \text{such that}\,\, \th_i\in\mathcal{B}(\th_B,r_B)\right\}
    =B
\end{split}   
\end{equation*}

\iffalse
we
know that with probability at least $1-\delta$, event $C$ implies
$D$. Notice that the latter implies 
\[
\text{\ensuremath{\forall}}\th_{B}\in\Omega,\ \left\Vert \th_{B}-\th_{B}^{N}\right\Vert \le r_{B}.
\]
\fi

\subsection{Proof of Lemma \ref{techlem:neighbor2}}
Given $\epsilon>0$, we choose $r_{0}$ sufficiently small such that
$Cr_{0}\le\epsilon$ (here $C$ is some constant defined in Lemma \ref{techlem:bound}). For this choice of $r_0$, given the corresponding $p_{0}$ (defined in assumption \ref{asm:density}), for any $\delta>0$,
there exists $N(\delta)$ such that $\forall N\ge N(\delta)$, we
have 
\[
p_{0}>\frac{4}{N}\left((d+1)\log(2N)+\log(8/\delta)\right):= \beta_N^2.
\]
And thus from assumption \ref{asm:density}, we have 
\[
\forall\th\in\supp(\rho_{T}^{\infty}),\ \mathbb{P}_{\th'\sim\rho_{T}^{\infty}}\left(\th'\in\mathcal{B}(\th,r_{0})\right)\ge p_{0}>\beta_{N}^{2}.
\]
This implies
\begin{equation*}
\text{supp}(\rho_T^\infty)    \subseteq A =\left\{ \th_{B}\, |\, \mathbb{P}_{\th\sim\rho}\left(\th\in \mathcal{B}\left(\th_B,r_0\right)\right)>\beta_N^2\right\} 
\end{equation*}

From Lemma \ref{techlem:neighbor} (set $r_B=r_0$), we know:
with probability at least $1-\delta$,
\begin{equation*}
A 
\subseteq
B =\left\{ \th_{B}\in\Omega\,\middle|\,\ \left\Vert \th_{B}-\th_{B}^{N}\right\Vert \le r_{0}\right\} ,
\end{equation*}
Thus, with probability at least $1-\delta$, 
$$\supp(\rho_{T}^{\infty})\subseteq B$$
and this means: with probability at least $1-\delta$,
we have 
\[
\forall\th\in\supp(\rho_{T}^{\infty}),\ \left\Vert \th-\th^{N}\right\Vert \le r_{0}.
\]
The result concludes from
\begin{align*}
 & \underset{\th\in\supp(\rho_{T}^{\infty})}{\sup}\left\Vert \bp(\th)-\bp^{N}(\th)\right\Vert \\
\le & \underset{\th\in\supp(\rho_{T}^{\infty})}{\sup}\left\Vert \bp(\th)-\bp(\th^{N})\right\Vert \\
\le & \underset{\th\in\supp(\rho_{T}^{\infty})}{\sup}C\left\Vert \th-\th^{N}\right\Vert \\
\le & Cr_{0} \le \epsilon.
\end{align*}
Here the last inequality uses Lemma \ref{techlem:bound}.

\subsection{Proof of Lemma \ref{techlem:neighbor3}}
In this proof, we simplify the statement that `for any $\delta>0$, when $N$ is sufficiently large, event  $E$ holds with probability at least $1-\delta$' by `when $N$ is sufficiently large, with high probability, event $E$ holds'.

Suppose that $\th_{i}$, $i\in[N]$ is the weight of neurons of network
$f_{\rho_{T}^{N}}$. Given any $\th\in\supp(\rho_{T}^{\infty})$,
define 
\[
\bp^{N}(\th)=\underset{\bp(\th')\in\mathrm{Vert}(\M_{N})}{\arg\min}\left\Vert \bp(\th')-\bp(\th)\right\Vert .
\]
Notice that the training dynamics of the network with $N$ neurons
can be characterized by 
\begin{align*}
\frac{\partial}{\partial t}\th_{i}(t) & =\grad[\th_{i}(t),\rho_{t}^{N}],\\
\th_{i}(0) & \overset{\text{i.i.d.}}{\sim}\rho_{0}.
\end{align*}
Here $\grad[\th,\rho]=\E_{\x,y\sim\D}\left(y-f_{\rho}(\x)\right)\nabla_{\th}\sigma(\th,\x).$ We define the following coupling dynamics:
\begin{align*}
\frac{\partial}{\partial t}\bar{\th}_{i}(t) & =\grad[\bar{\th}_{i}(t),\rho_{t}^{\infty}],\\
\bar{\th}_{i}(0) & =\th_{i}(0).
\end{align*}
Notice that at any time $t$, $\bar{\th}_{i}(t)$ can be viewed as
i.i.d. sample from $\rho_{t}^{\infty}$. We define $\hat{\rho}_{t}^{N}(\th)=\frac{1}{N}\sum_{i=1}^{N}\delta_{\bar{\th}_{i}(t)}(\th)$.
Notice that by our definition $\th_{i}=\th_{i}(T)$ and we also define
$\bar{\th}_{i}=\bar{\th}_{i}(T)$. Using the propagation of chaos
argument as \citet{mei2019meandimfree} (Proposition 2 of Appendix B.2), for any $T<\infty$,
for any $\delta>0$, we have 
\[
\underset{t\in[0,T]}{\sup}\underset{i\in\{1,..,N\}}{\max}\left\Vert \bar{\th}_{i}(t)-\th_{i}(t)\right\Vert \le\frac{C}{\sqrt{N}}\left(\sqrt{\log N}+\sqrt{\log1/\delta}\right).
\]

By Lemma \ref{techlem:neighbor2} and the bound above, when $N$ is sufficiently large, with
high probability, we have 
\begin{align*}
\underset{\th\in\supp(\rho_{T}^{\infty})}{\sup}\left\Vert \bp(\th)-\bar{\bp}^{N}(\th)\right\Vert  & \le\epsilon/2\\
\max_{i\in[N]}\left\Vert \bar{\th}_{i}(T)-\th_{i}(T)\right\Vert  & \le\frac{\epsilon}{2C},
\end{align*}
where $C=\left\Vert \bp\right\Vert _{\lip}$ and
\[
\bar{\bp}^{N}(\th)=\underset{\th'\in\supp(\hat{\rho}_{T}^{N})}{\arg\min}\left\Vert \bp(\th)-\bar{\bp}^{N}(\th)\right\Vert .
\]
We denote $\bar{\th}_{i_{\th}}\in\supp(\hat{\rho}_{T}^{N})$ such
that $\bar{\bp}^{N}(\th)=\bp(\bar{\th}_{i_{\th}})$. It implies that
\begin{align*}
\underset{\th\in\supp(\rho_{T}^{\infty})}{\sup}\left\Vert \bp(\th)-\bp^{N}(\th)\right\Vert  & \le\underset{\th\in\supp(\rho_{T}^{\infty})}{\sup}\left\Vert \bp(\th)-\bp\left(\th_{i_{\th}}\right)\right\Vert \\
 & =\underset{\th\in\supp(\rho_{T}^{\infty})}{\sup}\left\Vert \bp(\th)-\bar{\bp}^{N}(\th)+\bar{\bp}^{N}(\th)-\bp\left(\th_{i_{\th}}\right)\right\Vert \\
 & =\underset{\th\in\supp(\rho_{T}^{\infty})}{\sup}\left\Vert \bp(\th)-\bp\left(\bar{\th}_{i_{\th}}\right)+\bp\left(\bar{\th}_{i_{\th}}\right)-\bp\left(\th_{i_{\th}}\right)\right\Vert \\
 & \le\underset{\th\in\supp(\rho_{T}^{\infty})}{\sup}\left\Vert \bp(\th)-\bp\left(\bar{\th}_{i_{\th}}\right)\right\Vert +\underset{\th\in\supp(\rho_{T}^{\infty})}{\sup}\left\Vert \bp(\th)-\bp\left(\bar{\th}_{i_{\th}}\right)\right\Vert \\
 & \le\epsilon/2+\max_{i\in[N]}\left\Vert \bar{\th}_{i}(T)-\th_{i}(T)\right\Vert \left\Vert \bp\right\Vert _{\lip}\\
 & \le\epsilon.
\end{align*}

\end{document}